\documentclass{article} % For LaTeX2e
\usepackage{times}
\usepackage{iclr2027_conference}

\usepackage{amsmath,amsfonts,bm}

\def\eqref#1{equation~\ref{#1}}
\def\1{\bm{1}}

\DeclareMathAlphabet{\mathsfit}{\encodingdefault}{\sfdefault}{m}{sl}
\SetMathAlphabet{\mathsfit}{bold}{\encodingdefault}{\sfdefault}{bx}{n}

\usepackage{hyperref}
\usepackage{url}
\usepackage{booktabs}
\usepackage{multicol}
\usepackage{multirow}

\usepackage{amsmath}
\usepackage{amssymb}
\usepackage{amsthm}
\usepackage{longtable,pdflscape}

\usepackage[capitalize,noabbrev]{cleveref}

\makeatletter
\@ifundefined{proposition}{
    \newtheorem{proposition}{Proposition}[section]
}{}
\makeatother

\title{\methodname{}: Making Privileged Geometry Supervision Effective for Visuomotor Learning}

\iclrfinalcopy

\author{
Han Fang \\
Global College, Shanghai Jiao Tong University \\
Shanghai, China
\And
Yunpeng Jiang \\
Global College, Shanghai Jiao Tong University \\
Shanghai, China
\And
Jianshu Hu \\
Shanghai Institute of Physical Intelligence and Robotics (PAIR) \\
Shanghai, China
\And
Zhiyuan Guan \\
Global College, Shanghai Jiao Tong University \\
Shanghai, China
\And
Ruiguo Sun \\
East China University of Science and Technology \\
Shanghai, China
\And
Shujia Li \\
Global College, Shanghai Jiao Tong University \\
Shanghai, China
\And
Paul Weng \\
Duke Kunshan University \\
Jiangsu, China
\And
Xiao Li \\
School of Mechanical Engineering, Shanghai Jiao Tong University \\
Shanghai, China
\And
Yutong Ban\thanks{Corresponding author: \texttt{yban@sjtu.edu.cn}} \\
Global College, Shanghai Jiao Tong University \\
Shanghai, China
}

\newcommand{\methodfullname}{\textbf{L}atent \textbf{E}ncoding with \textbf{A}ligned \textbf{P}rivileged Geometry}
\newcommand{\methodname}{LEAP}
\newcommand{\methodintro}{\methodfullname{}  (\methodname{})}

\usepackage{etoolbox}
\newtoggle{final}
\toggletrue{final} % uncomment before submission

\iftoggle{final}{
	\usepackage[disable]{todonotes}
}{
    \setlength{\marginparwidth}{1.5cm}
	\usepackage[size=tiny]{todonotes}
}

\definecolor{darkgreen}{rgb}{0 0.6 0}
\newcommand{\pw}[1]{\iftoggle{final}{#1}{{\color{orange} #1}}}

\begin{document}

\maketitle
\fancyhead{}

\begin{abstract}
Privileged 3D supervision uses additional geometric information during training to guide RGB-based visuomotor policy learning, without requiring geometric inputs at deployment.
However, low reconstruction error does not ensure that visual representations capture geometry useful for control. We identify three limitations that weaken this supervision: proprioceptive shortcuts, dominant-view reliance, and reconstruction objectives dominated by task-irrelevant geometry. To address these limitations, we propose Latent Encoding with Aligned Privileged Geometry (\methodname{}). 
Our framework uses an auxiliary decoder to reconstruct point clouds within the manipulation workspace from visual features alone, while retaining proprioception for action prediction.
Alongside full reconstruction, we introduce wrist-view dropout and partial reconstruction targets matched to the retained wrist, encouraging the encoder to capture complementary local geometry.
The auxiliary decoder is removed at inference, leaving only RGB observations and robot state as policy inputs. 
Extensive experiments on RoboTwin, ManiSkill, and real-world tasks demonstrate consistent and substantial improvements over Diffusion Policy and ACT, with only a small increase in parameter count and no reduction in inference speed.

\end{abstract}

\section{Introduction}
\label{sec:introduction}

Reliable robotic manipulation depends on the 3D relationships among objects and the robot. RGB-based imitation learning methods, such as Diffusion Policy~\citep{chi2023diffusion} and ACT~\citep{zhao2023act}, learn complex skills from demonstrations, but their training objectives do not explicitly require visual features to encode these relationships. 
% \TODO{"action objectives" sounds strange to me.
% Also, it's ambiguous since it could also refer to the goals of the actions.
% I guess you are referring to "training objective" or "training loss".
% Why do you need to specify "action"?}
Point-cloud policies such as DP3~\citep{ze2024dp3} provide explicit geometric inputs, but require depth sensing and point-cloud processing at deployment, adding sensing requirements and computational cost. 
Privileged geometry supervision instead uses 3D information during training, through auxiliary geometric prediction~\citep{li2025qdepth,he2026foresight,kim2026pri4r} or representation alignment~\citep{li2025spatialforcing}, while retaining an RGB policy at deployment. 
% Related work also transfers geometric knowledge through a pretrained visual encoder~\citep{vuong2025evggt}. 
% \TODO{Do we really need this last sentence in the intro?
% That work could instead be mentioned in the Related Work for instance.}

% Recent methods supervise policies through auxiliary geometric prediction~\citep{li2025qdepth,he2026foresight,kim2026pri4r} or alignment with pretrained geometric features~\citep{li2025spatialforcing,vuong2025evggt}.
%\TODO{I guess 3D point-cloud policies have better accuracies than RGB-based policies trained via Privileged geometry.solved}
% I will state clearly this fact and identify the explanation for this gap, which is your next sentence.}
% However, a low geometric loss alone does not establish what information the visual encoder has learned.
% We identify three limitations that can weaken this supervision.
%\TODO{I would list the names of the three limitations here.solved}
\pw{While RGB-based policies are easier to deploy, this line of work suffers from the following potential issue.}
Without explicit 3D observations at deployment, an RGB policy must infer task-relevant geometry from its images.
A low geometric training loss does not establish that it has done so. 
\pw{In this paper, considering policies that take multi-view RGB images and robot state as input, we investigate the following two questions:} 
% \TODO{Does previous work in this direction all use "policies that take multi-view RGB images and robot state as input"? If that's the case, this should be mentioned earlier (e.g., previous paragraph).}
% (1) when does \pw{privileged supervision with depth-derived geometric information} make the visual representation \pw{really} capture task-relevant 3D structure?
(1) when does \pw{privileged supervision with depth-derived geometric information} fail to make the visual representation capture task-relevant 3D structure?
% \TODO{Given the flow of the presentation, it seems it would be better to formulate the first question in the negative: 
% when does \pw{privileged supervision with depth-derived geometric information} fail to make the visual representation capture task-relevant 3D structure?
% Then the second question would be:
% how can privileged 3D supervision be formulated to address/reduce/circumvent this failure?
% }
% (2) how can privileged 3D supervision be formulated to ensure it?
(2) how can privileged 3D supervision be formulated to circumvent this failure?
\pw{To answer the first question, w}e identify three limitations of privileged geometry supervision: \emph{proprioceptive shortcuts}, \emph{view dominance}, and \emph{task-irrelevant global prediction}.
\pw{To address the second, we propose a novel training framework.
We expand on those points next.}

\begin{figure}[t]
    \centering
    \includegraphics[width=0.99\linewidth]{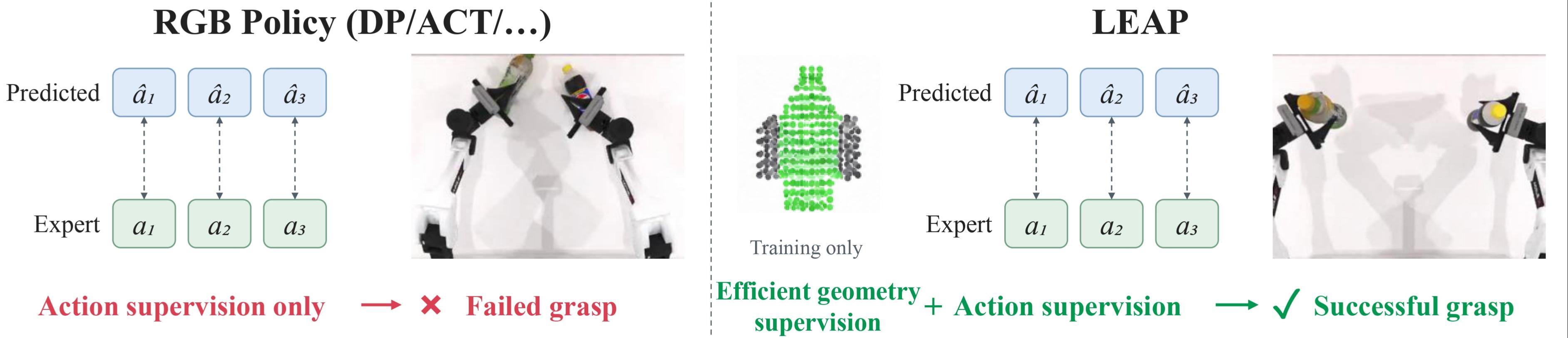}
    \caption{Illustration of \methodname{}’s design goal. \methodname{} learns geometry-aware policies through training-only 3D supervision, without requiring 3D inputs at deployment.}
    \label{fig:teaser}
    \vspace{-2ex}
\end{figure}

A \emph{proprioceptive shortcut} occurs when \pw{an auxiliary} geometry decoder uses robot state to reduce reconstruction error without extracting the corresponding geometry from images. 
Such shortcuts have been studied in action prediction~\citep{zhao2025statefree}, but their effect on privileged geometry supervision remains underexplored.
\emph{View dominance} occurs when the broad-coverage camera explains much of a fused reconstruction target, leaving the contribution of other views unclear. 
% \TODO{To make this sentence easier to understand, I think you should state before that the three views are head and the two wrists.}
Finally, \emph{task-irrelevant global prediction} can devote much of the \pw{auxiliary reconstruction} objective to background and robot geometry while giving little weight to the local contact and occlusion geometry needed for control. Together, these limitations allow low geometric loss without ensuring that the visual features preserve the geometry relevant to manipulation.
% Together, these limitations \pw{can lead to nearly-}accurate geometric prediction without ensuring that visual features preserve the information needed for manipulation.

% Motivated by these observations, we propose \methodname{}, a training framework for learning geometry-aware representations from multi-view RGB observations. 
% Its central idea is to prevent nonvisual information from solving the geometric learning objective and to align the supervision with the evidence available in the observed camera views. 
% This encourages the visual encoder to capture task-relevant and complementary 3D information while preserving the standard RGB and proprioceptive interface of the underlying policy. 
% Privileged geometry is used only during training, and the framework can be applied to different policy families without introducing additional geometric sensing at deployment.

\pw{To overcome these limitations, w}e propose \methodintro{}, a training framework that directs privileged geometry supervision toward task-relevant visual learning.
% Its core idea is to separate geometric learning from proprioceptive shortcuts, encourage complementary information across camera views, and focus supervision on geometry relevant to manipulation.
%\TODO{Instead of the last sentence, I would explain how each limitation is avoided or reduced in LEAP. solved}
% By aligning geometric supervision with the available visual evidence and the needs of control, \methodname{} encourages policies to learn useful 3D representations while requiring only RGB observations and robot state at deployment.
Its geometry decoder receives only visual features, limiting proprioceptive shortcuts. 
% \TODO{The geometry decoder is what was called geometry predictor, right?
% It would be better to be consistent.}
During training, we randomly mask views and matches a partial geometry target to the remaining views, making wrist contributions easier to identify. 
% Combining this target with full-scene reconstruction gives greater weight to geometry relevant to manipulation. 
At deployment, the policy uses only multi-view RGB images and robot state.

Our contribution\pw{s} can be summarized as follows:
\begin{itemize}
    % \item We argue that an auxiliary geometric loss does not necessarily induce a task-relevant geometric visual representation. We identify two potential causes, including shortcut prediction from correlated robot states and loss domination by predictable robot and background regions.
    \item We identify proprioceptive shortcuts, view dominance, and task-irrelevant global prediction as limitations of privileged geometry supervision for multi-view visuomotor learning.

    % \item Motivated by these observations, we introduce a geometry-supervised training framework that isolates visual learning from nonvisual shortcuts, targets interaction-relevant geometry, and conditions both geometric and action supervision on the available camera views.
    \item We introduce \methodname{}, combining visual-only geometric reconstruction, view dropout with visibility-aligned partial reconstruction, and workspace-constrained supervision.

     % \item We conduct controlled representation and policy evaluations with different RGB-based policies across simulation and real-robot settings, examining geometric generalization, modality shortcuts, and missing-view robustness.
    \item We evaluate these limitations and the corresponding designs through abundant controlled experiments and policy evaluations across simulation and real-robot settings with different RGB-based policy architectures.
   
\end{itemize}

\section{Related Work}
\label{sec:related_work}

\pw{Precise 3D manipulation can be achieved by directly using 3D observations or multiview RGB images as policy inputs, but also via geometric supervision.
Next, we recall those three research directions and discuss related work on learning shortcuts.}
% \TODO{If necessary, I think this section could be shortened.}

% \paragraph{Explicit 3D Inputs for Robotic Manipulation.}
% Explicit 3D observations provide geometric structure for visuomotor control. \citet{shridhar2023peract} voxelize RGB-D observations, \citet{goyal2023rvt} render virtual views from point clouds, and \citet{gervet2023act3d} use sensed depth to lift image features into a 3D feature field for manipulation.
% % \TODO{As stated, it's not clear that the latter work uses 3D observations}
% GNFactor couples scene reconstruction and control through a shared voxel representation~\citep{ze2023gnfactor}. More recent policies encode point clouds for action prediction~\citep{ze2024dp3,wang2024rise}, generate trajectories conditioned on 3D scenes~\citep{ke2024diffuseractor}, or combine structured point maps with RGB observations~\citep{jia2025pointmappolicy}. These methods retain geometric observations or representations in the deployment pipeline. 
% \pw{In contrast, o}ur framework uses geometry as training supervision and executes from RGB observations and proprioception \pw{only}.
\paragraph{Explicit 3D Inputs for Robotic Manipulation.}
\citet{shridhar2023peract} voxelize RGB-D inputs, \citet{goyal2023rvt} render virtual views from point clouds, and \citet{gervet2023act3d} use sensed depth to lift visual features into a 3D field. GNFactor shares a voxel representation between reconstruction and control~\citep{ze2023gnfactor}. Other policies encode point clouds~\citep{ze2024dp3,wang2024rise}, predict trajectories from 3D scenes~\citep{ke2024diffuseractor}, or combine point maps with RGB~\citep{jia2025pointmappolicy}. These approaches retain geometric observations or representations at deployment. LEAP uses geometry only for training supervision and acts from RGB and proprioception.

\paragraph{Multi-View Learning and Camera Availability.}
Multi-view observations provide complementary evidence for manipulation. \citet{hsu2022hands} show that hand-centric and global views differ in their coverage and generalization behavior. MV-MWM \citep{seo2023mvmwm} reconstructs masked views, while 3D-MVP \citep{qian2024threedmvp} and DiffuView \citep{zhang2026diffuview} use multi-view pretraining to learn spatial representations. 
MAD \citep{almuzairee2025mad} learns fused and per-view features for missing-camera robustness, and Multi-Camera View Scaling \citep{xie2026multicamera} uses synchronized views to increase the effective training data. 
\citet{akinola2020multicamera} use camera dropout to make RGB policies robust to missing views.
\pw{In contrast, o}ur concern is the geometric objective under such masking: a low loss on a fused scene may leave the contribution of a wrist view unclear.

\paragraph{Geometric Representation Learning for RGB Policies.}
Geometric supervision can improve RGB policies without requiring 3D observations at inference. DPR and SPA pretrain encoders through depth prediction and multi-view rendering, while eVGGT and GP3 transfer pretrained geometric knowledge~\citep{wang2024dpr,zhu2024spa,vuong2025evggt,qian2025gp3}. \citet{wang2025monolift} distill depth-guided spatial, temporal, and action cues into an RGB policy. During policy learning, methods predict depth, future scenes, or 3D motion~\citep{li2025qdepth,he2026foresight,qian2025geopredict,kim2026pri4r,he2026elan4d}, align features with geometric teachers~\citep{li2025spatialforcing,wang2026vega,wang2026track4action,shi2026thinkvla}, or use training-only 3D Gaussian prediction heads~\citep{jiang2026gaussiandreampp,zhang2026gaussianwam}.

Other methods focus on interaction-relevant structure: object geometry~\citep{ding2026mindvla,liu2026sam3dalignment}, subtask-relevant regions~\citep{gao2026focalvla}, or affordances and goal regions~\citep{zheng2026gift}. \citet{yang2026mvucf} combine depth and cross-view correspondence supervision, while \citet{xu2026actiongeometry} jointly predict actions and future point maps from features that include proprioception. LEAP instead reconstructs workspace and view-matched partial point clouds from visual features alone, encouraging geometric learning from complementary camera views.

\paragraph{Proprioceptive Shortcuts in Visuomotor Learning.}
Prior work identifies proprioceptive over-reliance~\citep{lin2024spawnnet} and optimization-driven visual suppression~\citep{lu2026gap}.
State-free Policy~\citep{zhao2025statefree} removes proprioception to improve spatial generalization, while ReViP~\citep{li2026revip} rebalances vision and proprioception to reduce false task completion.
% Closest to our setting, \citet{patil2026anyviewdex} combine privileged 3D object-coordinate prediction with multi-view alignment, recognizing proprioceptive shortcuts in state-fused distillation.
We instead examine, both theoretically and empirically, how proprioceptive shortcuts can weaken privileged point-cloud supervision in multi-view imitation learning.
To mitigate these shortcuts and focus supervision on task-relevant geometry, \methodname{} reconstructs workspace point clouds from visual features alone and supplements full reconstruction with visibility-aligned partial supervision under wrist-view dropout.
This design encourages complementary local geometric representations while retaining proprioception for action prediction.

\section{Background}
\label{sec:background}

% \paragraph{Visuomotor Imitation Learning.}
% A visuomotor policy consists of a visual encoder and an action predictor, trained jointly on expert demonstrations.
% Let $\mathcal{V}=\{h,l,r\}$ index the head, left-wrist, and right-wrist cameras, and let $I_{\mathcal{V}}=(I_c)_{c\in\mathcal{V}}$ denote their RGB observations.
%\TODO{I think it would have been better to state that you focus on two-arm robots earlier. Solved}
% The proprioceptive state $S$ contains robot measurements such as joint angles and gripper configurations.
% The visual encoder $f_\theta$ extracts per-camera features
% \begin{equation}
%     Z_c=f_\theta(I_c), \qquad c\in\mathcal{V},
% \end{equation}
% which are aggregated into a visual representation $Z$.
% The action predictor uses this representation together with robot state:
% \begin{equation}
%     a\sim\pi_\eta(\cdot\mid Z,S),
% \end{equation}
% where $\theta$ and $\eta$ denote the encoder and action-predictor parameters, respectively, and $a$ denotes an action or an action sequence.
%\TODO{$\theta$ should have been defined the first time it appears. Solved}
% For Diffusion Policy, the action predictor generates action sequences through iterative denoising conditioned on $Z$ and $S$~\citep{chi2023diffusion}.
%\TODO{Since you give an example for sequence of actions, you may want to give an example for single action as well. Solved}

\paragraph{Visuomotor Imitation Learning.}
In this work, we mainly consider imitation learning for two-arm robots equipped with a head camera and two wrist cameras. 
% \TODO{Since you talked about wrist views in the intro, I think this kind of sentence should appear even earlier.}
A visuomotor policy comprises a visual encoder and an action predictor trained on expert demonstrations. 
Let $\mathcal{V}=\pw{\{h,l,r\}}$ index the head, left-wrist, and right-wrist cameras, and let $I_{\mathcal{V}}=(I_c)_\pw{{c\in\mathcal{V}}}$ denote their RGB observations. The proprioceptive state $S$ includes measurements such as joint angles and gripper configurations. 
A visual encoder $f_\theta$, parameterized by $\theta$, extracts features from each view:
% \TODO{There's only one visual encoder that processes all the views, right?
% I would write "A shared visual encoder"}
\begin{equation}
Z_c=f_\theta(I_c), \qquad c\in\mathcal{V}.
\end{equation}
These features are aggregated into a visual representation $Z$. An action predictor $\pi_\eta$, parameterized by $\eta$, conditions on $Z$ and $S$:
\begin{equation}
a\sim\pi_\eta(\cdot\mid Z,S).
\end{equation}
Here, $a$ may denote a single command, such as joint targets and gripper commands for the next control step, or a sequence of such commands. ACT predicts action sequences in chunks~\citep{zhao2023act}, while Diffusion Policy generates them through iterative denoising~\citep{chi2023diffusion}.

\paragraph{Privileged Geometric Supervision.}
Privileged geometric supervision augments imitation learning with a geometric target $G$, such as a point cloud derived from depth measurements, that is available during training but not required at deployment.
An auxiliary decoder $g_\phi$, with parameters $\phi$, reconstructs $G$ from the same visual representation $Z$ used by the action predictor, optionally conditioned on robot state.
% \TODO{I would suggest to choose between "decoder" or "predictor" and be consistent}
We distinguish two reconstruction settings:
\begin{equation}
    \underbrace{g_\phi(Z,S)}_{\text{state-conditioned}},
    \qquad
    \underbrace{g_\phi(Z)}_{\text{visual-only}}.
\end{equation}
The reconstruction loss trains the auxiliary decoder and the shared visual encoder alongside the imitation-learning objective.
% \TODO{In the paper, I think you use different terms to designate "reconstruction loss". 
% I suggest to be consistent as well.}
% \TODO{I think the imitation-learning objective should also be recalled.}
At deployment, the auxiliary decoder is removed, and the policy predicts actions using only RGB observations and proprioceptive state.

\paragraph{Reconstruction Risk.}
We quantify the recoverability of target geometry $G$ from decoder input $X$ using the optimal population reconstruction risk $\mathcal{R}_{d}(G\mid X)$ \pw{defined as follows for}
$d(\widehat G,G)\geq 0$ a divergence that measures error between a prediction and its target:
% \TODO{I think this definition of distortion function is insufficient.
% For instance, you also want $d(G, G) = 0$.
% Is your distortion function a divergence?}
\begin{equation}
    \mathcal{R}_{d}(G\mid X)
    :=
    \inf_g
    \mathbb{E}\!\left[
        d\!\left(g(X),G\right)
    \right],
    \label{eq:optimal_reconstruction_risk}
\end{equation}
where the infimum ranges over all measurable decoders $g$, and the expectation is over the joint distribution of $(G,X)$.
The divergence takes the prediction first and the target second.
We consider $X=Z$ for visual-only reconstruction and $X=(Z,S)$ for state-conditioned reconstruction.
For vector-valued $G$, $\mathcal{R}_{2}$ denotes the risk under squared Euclidean error,
$d_2(\widehat G,G)=\|\widehat G-G\|_2^2$.

\section{Method}
\label{sec:method}
% \TODO{The title of this section mainly fits its second part...}

% \TODO{I find the following paragraph not so clear. 
% Instead I would first discuss the three issues mentioned in the introduction.
% Then, in another paragraph, I would present the proposed method by starting with its high-level description and providing a figure depicting its architecture}
% \TODO{If you want to keep everything under one section, you should introduce its organization first to help the readers understand what they are going to read.}
% % We seek to make training-only privileged 3D supervision an identifiable and policy-relevant constraint on an RGB policy. 
% \TODO{Is "training-only" and "privileged" a bit redundant?}
% However, state-conditioned decoders permit proprioceptive shortcuts, while full-view reconstruction leaves wrist-specific contributions ambiguous when the head view dominates. 
% \methodname{} therefore reconstructs task-relevant partial point clouds solely from visual features and couples wrist dropout with targets supported by each retained head-wrist pair. 
% This aligned supervision encourages complementary wrist geometry in the shared policy representation without requiring geometric input at deployment.

We first identify three limitations of privileged geometric supervision: proprioceptive shortcuts, dominant-view reliance, and reconstruction objectives dominated by task-irrelevant geometry.
These issues can allow low reconstruction error without ensuring that visual features capture the geometry needed for control.
% We then present our framework and its training objective.
%\TODO{Is the previous sentence needed given the next one?
%At least, it seems redundant. Solved}
Guided by this analysis, we introduce \methodname{}, which aims to make geometric supervision an observation-aligned and policy-relevant constraint on visual representations.
As shown in \Cref{fig:method}, its core principles are to separate geometric learning from proprioceptive shortcuts, encourage complementary information across views, and focus supervision on task-relevant geometry.
The resulting policy requires only RGB observations and proprioceptive state at deployment.
% \TODO{I feel that the last sentence is repeated too many times...}
% \Cref{fig:method} provides an overview of the architecture.

\begin{figure}[t]
    \centering
    \includegraphics[width=0.85\linewidth]{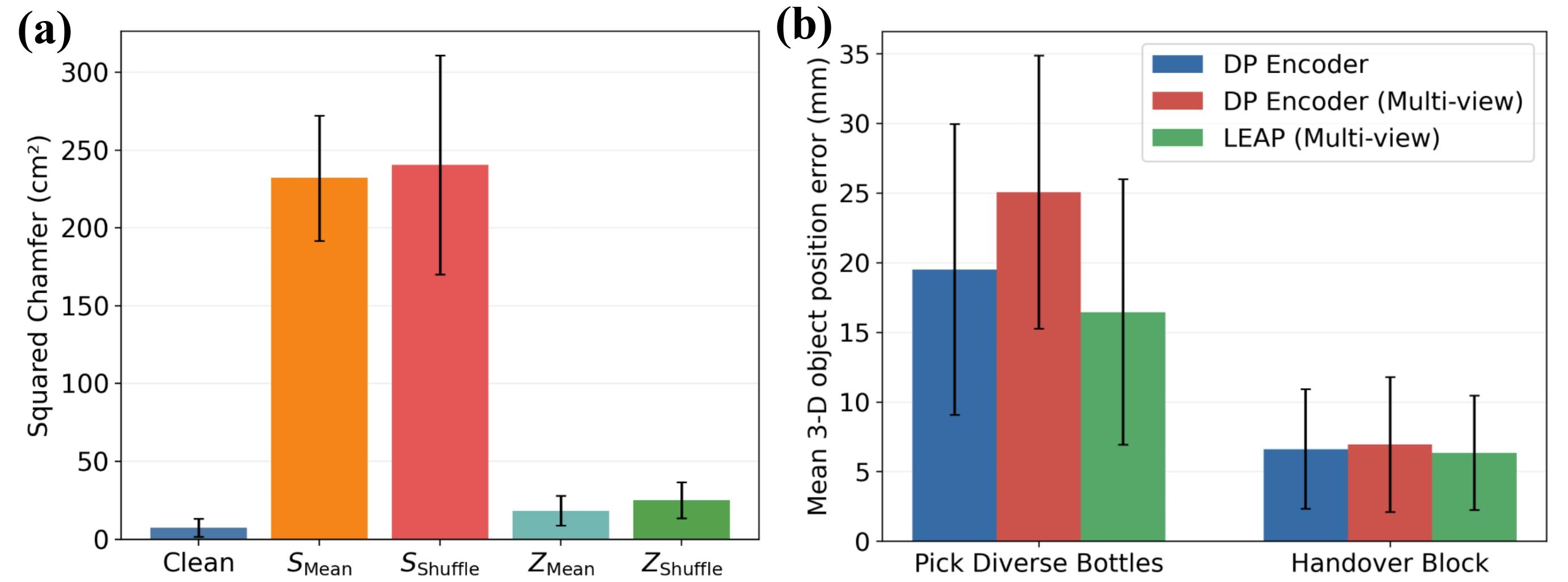}
    \caption{Preliminary diagnostics of geometry learning. (a) Reconstruction error under unperturbed inputs (Clean), mean replacement (Mean), or shuffling (Shuffle) of robot state $S$ or visual features $Z$. (b) Mean 3D object-position error on two tasks, comparing DP Encoder, DP Encoder with Multi-view setting, and LEAP Encoder with Multi-view setting. Error bars denote standard deviations.}
    \label{fig:pre}
    \vspace{-2ex}
\end{figure}

\subsection{When Does Privileged Geometry Benefit Policy Learning?}
\label{sec:when}

Low reconstruction error alone cannot indicate that a visual representation captures the geometry required for control.
We first analyze proprioceptive and dominant-view shortcuts, accompanied by preliminary empirical diagnostics.
We then discuss how global reconstruction objectives can underweight task-relevant local geometry.

\paragraph{Proprioceptive Shortcut.}
Recent methods~\citep{xu2026actiongeometry,yang2026robo3r} predict privileged geometry from both visual features and robot state.
Robot state determines the robot's configuration and may correlate with scene geometry, allowing the decoder to reduce reconstruction error without requiring visual features to encode the corresponding geometry.

We investigate this shortcut using a DP policy trained with state-conditioned geometry reconstruction on RoboTwin.
% \TODO{Remark: the best would have been to use a policy trained in some previous works.}
With the encoder and decoder frozen, we replace either robot state or visual features with their training-set mean
($S_{\mathrm{Mean}}$, $Z_{\mathrm{Mean}}$)
or shuffle them across evaluation samples
($S_{\mathrm{Shuffle}}$, $Z_{\mathrm{Shuffle}}$),
keeping the other input and reconstruction target unchanged.
We measure reconstruction quality using squared Chamfer error, which averages mean squared nearest-neighbor errors between predicted and target point clouds.
% \TODO{Why do you need to specify "aggregates"?
% This suggests that there are many means...
% Why not simply say: squared Chamfer distance=average squared mismatch between two 3D point clouds (if you don't want to provide the formula)?}
As shown in \Cref{fig:pre}(a), both state interventions sharply increase reconstruction error, whereas the corresponding visual interventions produce much smaller increases.
This difference suggests that the trained decoder relies strongly on proprioception, although it does not establish that visual features are completely invalid.

To explain why the reconstruction objective permits this behavior, the following proposition compares the optimal population risks with and without state conditioning (see \Cref{app:state_nonidentifiability} for proof).  

\begin{proposition}[State-induced non-identifiability]
\label{prop:state_nonidentifiability}
Let $d:\mathcal{G}\times\mathcal{G}\to[0,\infty)$ be a measurable reconstruction cost; no metric properties are required.
% \TODO{Why is it important to always specify "prediction first and target second"?
% If you keep your notations consistent, isn't it enough?}
For the population risk in \Cref{eq:optimal_reconstruction_risk}, with the infimum taken over all measurable decoders,
% \TODO{Instead of taking the inf over all measurable decoders, I suggest to do it over a hypothesis class of measurable decoders.
% I think your result is true for any such hypothesis class, which makes it stronger.
% Indeed, if the hypothesis class is induced by NNs, this directly applies to your experiments.}
\begin{equation}
    \mathcal{R}_{d}(G\mid Z,S)
    \leq
    \mathcal{R}_{d}(G\mid Z).
    \label{eq:risk_monotonicity}
\end{equation}
For $\mathcal{G}=\mathbb{R}^{m}$,
$\mathbb{E}[\|G\|_2^2]<\infty$, and squared Euclidean divergence
$d_2(\widehat G,G)=\|\widehat G-G\|_2^2$,
\begin{equation}
\begin{aligned}
    &\mathcal{R}_{2}(G\mid Z)
    -
    \mathcal{R}_{2}(G\mid Z,S) 
     =
    \mathbb{E}\!\left[
        \left\|
            \mathbb{E}[G\mid Z,S]
            -
            \mathbb{E}[G\mid Z]
        \right\|_2^2
    \right].
\end{aligned}
\label{eq:state_risk_gap}
\end{equation}
\end{proposition}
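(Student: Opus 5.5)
The monotonicity claim \Cref{eq:risk_monotonicity} follows from an inclusion of decoder classes. Any measurable decoder $g$ of $Z$ alone induces a measurable decoder of $(Z,S)$, namely $\tilde g(z,s):=g(z)$, which ignores its second argument. Since $\tilde g(Z,S)=g(Z)$ pointwise, the two decoders have the same risk:
\[
\mathbb{E}[d(\tilde g(Z,S),G)]=\mathbb{E}[d(g(Z),G)].
\]
The infimum over decoders of $(Z,S)$ is therefore taken over a set of risk values that contains every risk attainable by decoders of $Z$, so it can only be smaller. This uses only nonnegativity and measurability of $d$, with no metric properties. The infimum may equal $+\infty$ on either side, and the inequality still holds in $[0,\infty]$.

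For the squared-error identity \Cref{eq:state_risk_gap}, the plan is to identify both risks with conditional-expectation residuals and then apply the Pythagorean decomposition in $L^2$. Because $\mathbb{E}\|G\|^2<\infty$, both $\mathbb{E}[G\mid Z]$ and $\mathbb{E}[G\mid Z,S]$ exist and lie in $L^2$. By the Doob--Dynkin lemma they are measurable functions of $Z$ and of $(Z,S)$ respectively, so they are admissible decoders.

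The first step is to show that $\mathcal{R}_2(G\mid X)=\mathbb{E}\|G-\mathbb{E}[G\mid X]\|^2$ for any $X$. Take an arbitrary measurable $g$, assuming without loss of generality that $\mathbb{E}\|g(X)\|^2<\infty$; otherwise its risk is infinite, by the triangle inequality in $L^2$. Expanding gives
\[
\mathbb{E}\|g(X)-G\|^2=\mathbb{E}\|g(X)-\mathbb{E}[G\mid X]\|^2+\mathbb{E}\|\mathbb{E}[G\mid X]-G\|^2 .
\]
The cross term vanishes by the tower property, because $g(X)-\mathbb{E}[G\mid X]$ is $\sigma(X)$-measurable and $\mathbb{E}\bigl[G-\mathbb{E}[G\mid X]\bigm|X\bigr]=0$. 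So the infimum is attained at the conditional mean.

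The second step writes
\[
G-\mathbb{E}[G\mid Z]=\bigl(G-\mathbb{E}[G\mid Z,S]\bigr)+\bigl(\mathbb{E}[G\mid Z,S]-\mathbb{E}[G\mid Z]\bigr).
\]
The second summand is $\sigma(Z,S)$-measurable and square-integrable, so by the same tower argument it is orthogonal to the first. Squaring and taking expectations gives
\[
\mathcal{R}_2(G\mid Z)=\mathcal{R}_2(G\mid Z,S)+\mathbb{E}\bigl\|\mathbb{E}[G\mid Z,S]-\mathbb{E}[G\mid Z]\bigr\|^2 .
\]
Both terms on the right are finite, so the subtraction in \Cref{eq:state_risk_gap} is legitimate.

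The main obstacle is the integrability bookkeeping rather than any conceptual step. We must justify that the cross terms are integrable, which follows from Cauchy--Schwarz once every vector lies in $L^2$. We must also handle decoders with infinite second moment, which are excluded by the reduction noted in the first step. Finally, the vector-valued case should be reduced to coordinates, or handled directly with inner products, so that the scalar tower property applies componentwise.
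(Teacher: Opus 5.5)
Your proposal is correct and follows essentially the same route as the paper: monotonicity from embedding visual-only decoders as state-conditioned decoders that ignore $S$, then the Pythagorean decomposition of $G-\mathbb{E}[G\mid Z]$ with the cross term killed by the tower property. The one difference is that you explicitly prove that the conditional means attain the infima, including the reduction to square-integrable decoders, whereas the paper simply asserts this from the finite-second-moment assumption.
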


Here, $\mathbb{E}[G\mid Z,S]$ and $\mathbb{E}[G\mid Z]$ are conditional means under the joint data distribution: the population-optimal predictions under squared Euclidean loss, rather than outputs of two trained decoders.
The general inequality applies to squared Chamfer \pw{error}, while the risk-gap identity concerns vector-valued reconstruction under squared Euclidean error.
%The proof is provided in \Cref{app:state_nonidentifiability}.

\pw{This} limitation \pw{can occur} even with perfect state-conditioned reconstruction.
In the squared Euclidean setting, if $G=\psi(S)$ is not almost surely constant and $Z$ is constant, then \pw{we can have}
$\mathcal{R}_{2}(G\mid Z,S)=0$, whereas
\begin{equation}
    \mathcal{R}_{2}(G\mid Z)
    =
    \mathbb{E}\!\left[
        \|G-\mathbb{E}[G]\|_2^2
    \right]
    >0.
\end{equation}
Thus, low state-conditioned reconstruction error alone cannot certify that geometry is recoverable from visual features.

\paragraph{Dominant-View Shortcut.}
Even when reconstruction uses only visual inputs, the representation may rely primarily on a subset of views.
The head camera provides broad scene coverage, while wrist cameras reveal complementary local geometry.
If the head view already explains most reconstruction target, low reconstruction error cannot indicate the contribution of wrist observations.

We examine whether wrist features improve object-position prediction using frozen visual representations.
For the DP baseline trained with full point-cloud reconstruction, \emph{DP Encoder} uses only head-view features $Z_h$, whereas \emph{DP Encoder (Multi-view)} uses concatenated features $[Z_h,Z_l,Z_r]$ from the head and both wrist cameras.
We additionally evaluate \emph{\methodname{} (Multi-view)}, which uses the same three-view feature configuration but with representations learned through \methodname{}.
In all settings, camera-specific features are extracted before fusion and kept frozen while a small MLP is trained to predict task-object 3D positions.
We report mean Euclidean error on held-out episodes over three random seeds for MLP training.
As shown in \Cref{fig:pre}(b), adding wrist features to the full-reconstruction DP baseline does not reduce mean prediction error on either task, whereas \methodname{} achieves lower mean error using the same three-view feature configuration.
These results suggest that simply including wrist features does not guarantee better object-position prediction, motivating supervision that encourages complementary geometric information across views.
% We examine whether wrist features improve object-position prediction from a frozen encoder trained to reconstruct the full point cloud from all three camera views.
% % \TODO{Is readout a commonly-used term?
% % Why not use a more explicit term like prediction?}
% Small MLP predicts task-object 3D positions from either head-view features or concatenated head and both wrist-view features, extracted before fusion.
% % \TODO{Why do you use the term "probe"?}
% We evaluate mean Euclidean error on held-out episodes using three probe seeds.
% As shown in \Cref{fig:pre}(b), adding wrist features to the full-reconstruction baseline does not reduce mean prediction error on either task and increases it in some cases.
% These results suggest that simply adding wrist features does not effectively exploit the complementary information available from the wrist views.
% \TODO{I think it would be better to keep consistent: distance vs error}
% Thus, wrist features provide no additional benefit for object-position prediction under this probing protocol.
% This result does not directly establish view dominance after fusion or isolate geometry visible only from the wrists.
% \TODO{What's the purpose of this last sentence?}
% \TODO{This whole paragraph seems to come from an LLM.
% It would be better to adapt the explanation to the paper.}
The following proposition formalizes a related limitation of the reconstruction objective: when the head view alone permits accurate reconstruction, low error cannot establish that wrist observations contribute.

\begin{proposition}[Non-identifiability of view-wise contributions]
\label{prop:dominant_view}
Let $I_h$, $I_l$, and $I_r$ denote observations from the head, left-wrist, and right-wrist cameras, respectively, and let $G$ denote the target geometry.
Suppose a head-view encoder $f_h$ and decoder $g_h$ satisfy
\[
    \mathbb{E}\!\left[
        d\!\left(g_h(f_h(I_h)),G\right)
    \right]
    \leq \varepsilon.
\]
Consider full-view models with concatenated embeddings. If their hypothesis classes admit constant wrist encoders and a decoder that applies $g_h$ to the head embedding alone, then there exists an admissible pair $(F,g)$ satisfying \[  F(I_h,I_l,I_r) = [f_h(I_h),c_l,c_r],\quad \mathbb{E}\!\left[d\!\left(g(F(I_h,I_l,I_r)),G\right)\right] \leq \varepsilon, \]
% \TODO{It would be better if this sentence could be stated formally?}
% \TODO{I'm not sure how you did the proof, but it seems that the full-view model necessarily includes the previous equation, since the decoder for the full-view acts on the concatenation of the embedding of each view.
% Therefore, it is possible for this decoder to ignore the wrist views.}
\end{proposition}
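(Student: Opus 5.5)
The plan is a direct construction: build the full-view pair $(F,g)$ from the head-only pair $(f_h,g_h)$, then show pointwise that its reconstruction cost equals the head-only cost. The hypothesis-class assumption in \Cref{prop:dominant_view} is what makes this construction admissible. No metric properties of $d$ are needed, so the argument works for any measurable cost, as in \Cref{prop:state_nonidentifiability}.

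First, use the assumption that the full-view hypothesis class admits constant wrist encoders. Fix arbitrary constants $c_l,c_r$ in the wrist embedding spaces and set $f_l(I_l)\equiv c_l$ and $f_r(I_r)\equiv c_r$. Keep the given $f_h$ as the head encoder. The concatenated encoder is then
\[
F(I_h,I_l,I_r)=[f_h(I_h),c_l,c_r].
\]
It is measurable, since each component is measurable.

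Second, use the assumption that the class admits a decoder applying $g_h$ to the head embedding alone. Let $\pi_h$ denote the coordinate projection $[z_h,z_l,z_r]\mapsto z_h$, and define $g:=g_h\circ\pi_h$. This is measurable as a composition of measurable maps. For every realization of $(I_h,I_l,I_r,G)$,
\[
g\bigl(F(I_h,I_l,I_r)\bigr)=g_h(f_h(I_h)).
\]
Hence $d\bigl(g(F(I_h,I_l,I_r)),G\bigr)=d\bigl(g_h(f_h(I_h)),G\bigr)$ pointwise. Taking expectations under the joint law of $(I_h,I_l,I_r,G)$ gives equality of the two risks, and therefore a full-view risk of at most $\varepsilon$. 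The head-only expectation is the marginal of this joint expectation, so the two expectations agree.

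Third, conclude the non-identifiability reading. Varying the wrist inputs changes neither $F$'s head component nor $g$'s output, so this admissible near-optimal solution carries no wrist information. As a consequence, the infimum of the full-view risk over the class is also at most $\varepsilon$. An error of at most $\varepsilon$ therefore cannot certify that the wrist views contribute.

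There is no real analytic obstacle. The main point of care is measurability and well-definedness: the projection and constant maps must belong to, or be expressible within, the admissible classes, and the expectations should be taken under one common joint distribution. Both are granted by the stated hypotheses, so the step to check most carefully is simply that the constructed $g$ is exactly the decoder whose admissibility is assumed.
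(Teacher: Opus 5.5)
Your proof is correct and follows the paper's argument: build the full-view pair from $(f_h,g_h)$, note that the reconstruction cost agrees pointwise with the head-only cost, and take expectations. The only difference is cosmetic: the paper projects at the input level ($f^\star=f_h\circ\pi_h$, $g^\star=g_h$), whereas you use constant wrist encoders and project the concatenated embedding inside the decoder, which matches the stated form $[f_h(I_h),c_l,c_r]$ more literally.
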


The proof \pw{(see \Cref{app:dominant_view})} constructs a full-view model that reproduces the head-only predictor while ignoring both wrist observations.
Consequently, reconstruction accuracy alone cannot certify wrist-view use.
This matters for control because geometric coverage and control relevance need not coincide.
A wrist view may reveal a gripper--bottle gap hidden from the head camera: errors in this local geometry may contribute little to reconstruction error while determining grasp success.

% \paragraph{Task Relevance of Geometric Supervision.}
% The spatial distribution of supervision also matters.
% Consider a dense depth target whose task-relevant region occupies a fraction $\alpha\in(0,1)$ of the supervised pixels.
% Its uniformly averaged per-pixel reconstruction loss decomposes as
% \begin{equation}
%     \mathcal{L}_{\mathrm{global}}
%     =
%     \alpha\,\mathcal{L}_{\mathrm{relevant}}
%     +
%     (1-\alpha)\,\mathcal{L}_{\mathrm{remaining}},
% \label{eq:global_reconstruction_decomposition}
% \end{equation}
% where the two terms are the mean errors within their respective regions.
% \TODO{Actually, $\mathcal{L}_{\mathrm{relevant}}$ is already $\alpha$ of $\mathcal{L}_{\mathrm{global}}$.
% Scaling by $\alpha$ reduces even more its weight, no?}
% When $\alpha$ is small, accurate prediction of the remaining regions can yield low global error despite substantial errors in task-relevant geometry.
% This motivates explicitly constraining critical local geometry alongside global reconstruction.

\paragraph{Task Relevance of Geometric Supervision.}
Uniform spatial averaging can dilute supervision of task-relevant geometry.
Consider a depth target with $N$ pixels and a task-relevant subset $\mathcal{R}$ occupying a fraction $\alpha=|\mathcal{R}|/N$.
For per-pixel reconstruction errors $\ell_i$, define the regional means as
\[
    \mathcal{L}_{\mathrm{relevant}}
    =
    \frac{1}{|\mathcal{R}|}\sum_{i\in\mathcal{R}}\ell_i,
    \qquad
    \mathcal{L}_{\mathrm{remaining}}
    =
    \frac{1}{|\mathcal{R}^{c}|}\sum_{i\in\mathcal{R}^{c}}\ell_i.
\]
The global average therefore satisfies
\begin{equation}
    \mathcal{L}_{\mathrm{global}}
    =
    \alpha\,\mathcal{L}_{\mathrm{relevant}}
    +
    (1-\alpha)\,\mathcal{L}_{\mathrm{remaining}}.
    \label{eq:global_reconstruction_decomposition}
\end{equation}
For small $\alpha$, low global error can mask large task-relevant errors, motivating local supervision alongside global reconstruction.

The propositions identify possible, not inevitable, shortcuts.
Together with our diagnostics and task-relevance analysis, they motivate supervision that promotes visual geometric encoding, complementary view contributions, and emphasis on control-relevant regions.

% \TODO{How do we know which part is relevant?}

\begin{figure}[t]
    \centering
    \includegraphics[width=0.99\linewidth]{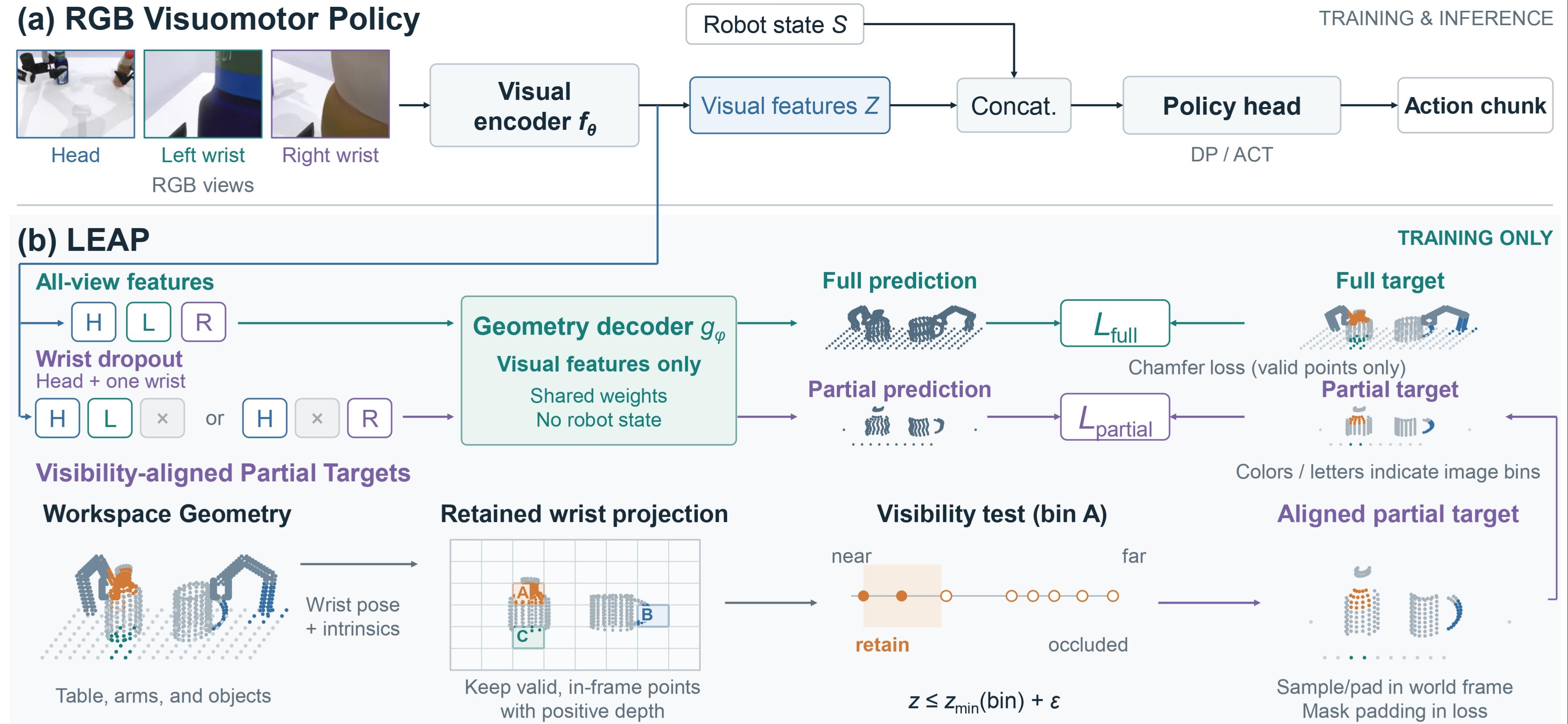}
    \caption{Overview of \methodname{}. (a) The workflow of an RGB-based policy. (b) During training, \methodname{} supervises the visual encoder through full and partial point-cloud reconstruction, using visual features alone. Partial reconstruction retains the head and one randomly selected wrist, with targets matched to that wrist’s approximate visibility. The geometry decoder is removed at inference.
}
    \label{fig:method}
    \vspace{-2ex}
\end{figure}

\subsection{\methodintro{}}
\label{sec:leap}

We introduce \methodname{} to address the limitations identified in \Cref{sec:when}.
The method combines grounding geometry in visual representations, learning complementary geometry with visibility-aligned partial reconstruction, and workspace-constrained geometric supervision.
% \TODO{Why not list the techniques in the same order as the limitations above?}
% Full reconstruction provides the base geometric objective, while partial reconstruction adds supervision for local geometry observed by the retained wrist.

\paragraph{Grounding Geometry in Visual Representations.}
State-conditioned reconstruction can exploit proprioceptive information without requiring the corresponding geometry in visual features (\Cref{prop:state_nonidentifiability}).
We therefore allow the action predictor to use robot state but exclude it from the geometry decoder.
% Let $\mathcal{V}=\{h,l,r\}$ index the head, left-wrist, and right-wrist cameras.
% For each camera $c\in\mathcal{V}$, the visual encoder extracts features
% $Z_c=f_\theta(I_c)$ from its RGB observation $I_c$.
% \TODO{The last two sentences repeat what the background section recalled, no?}
The reconstruction pathway receives only visual features, and its losses update the encoder shared with the policy.
This removes direct proprioceptive access from geometric reconstruction while preserving its use for action prediction.

% \paragraph{View Dropout and Partial Geometry Alignment.}
% \TODO{Why do you explain this technique last?}
% \TODO{What is an approximate visibility operator? Solved}

% Specifically, we project the world-coordinate points into the retained wrist camera using its intrinsics and extrinsics.
% \TODO{What are intrinsics and extrinsics? Solved}
% We discard nonfinite points, points behind the camera, and points projecting outside the image.
% The projected image is divided into $8\times8$-pixel bins.
% Within each bin, we retain points whose camera depth is within $0.01\,\mathrm{m}$ of the nearest projected point.
% This approximates occlusion using the sampled cloud, without comparison to the original wrist depth image.
% \TODO{I don't understand this last sentence. Solved}
% % Selected points remain in world coordinates and are sampled and padded to form fixed-size targets with validity masks.
% \TODO{More explanation is probably needed here. 
% I suggest to explain formally with some equations. Solved}

% When the left wrist is dropped, the partial objective uses the right-visible target, and vice versa.
% The head view remains available as context, while the target is selected according to the retained wrist.
% This adds a local geometric constraint alongside full reconstruction and encourages the representation to retain information useful for predicting wrist-visible surfaces.

\paragraph{Learning Complementary Geometry.}
The head camera provides broad scene coverage, whereas wrist cameras reveal local geometry that may be obscured from the head view.
To encourage each wrist to contribute complementary geometric information, we apply view dropout during training, always retaining the head view and randomly retaining one wrist:
\begin{equation}
    M\in
    \bigl\{
        \{h,l\},\{h,r\}
    \bigr\}.
    \label{eq:wrist_view_masks}
\end{equation}
Each wrist is thus trained as the sole wrist-side observation accompanying the head camera, encouraging the encoder to exploit its local observations without relying on the other wrist.

\paragraph{Visibility-Aligned Partial Reconstruction.}
To couple view dropout with local geometric supervision, we supplement full-scene reconstruction with a partial objective targeting surfaces approximately visible from the retained wrist.
Let $G^{\mathrm{full}}$ denote the fused point cloud within the manipulation workspace, whose construction is described below, and let $w(M)\in\{l,r\}$ denote the retained wrist.
We construct its partial target as
\begin{equation}
    G_M^{\mathrm{partial}}
    =
    \widetilde{\operatorname{Vis}}_{w(M)}
    \!\left(G^{\mathrm{full}}\right),
    \label{eq:view_aligned_target}
\end{equation}
where $\widetilde{\operatorname{Vis}}_{w(M)}$ selects approximately visible points through projection and depth filtering.
Specifically, we transform the full-cloud points from world coordinates into the retained camera's coordinate frame using its position and orientation, then project them into the image using its calibrated projection model.
After removing invalid points and those behind the camera or outside the image, we group nearby projections into image bins.
Within each bin, we retain points close in depth to the nearest point and treat more distant points as occluded.
Visibility is approximate because occlusion is inferred from the sampled cloud.
The selected points retain their world coordinates and are sampled or padded to a fixed size, with validity masks excluding padded entries from the reconstruction loss.
The partial target therefore follows the retained wrist, while the head view remains available as context.
Together with full-scene reconstruction, this objective encourages the representation to capture local wrist-visible geometry.

\paragraph{Workspace-Constrained Geometric Supervision.}
We construct the full reconstruction target from depth-derived point clouds captured by all three cameras.
Camera-space points are transformed into a common world coordinate frame, cropped to the configured manipulation workspace, and merged.
% \TODO{How do we perform the cropping?}
Farthest-point sampling produces the full target $G^{\mathrm{full}}$ with up to a fixed number of valid points; If the number of points is insufficient, we pad the target to the fixed number.
% \TODO{The last two sentences are not so clear.}
Geometric supervision uses the world-coordinate XYZ values.
% \TODO{I think this part needs more details}

The workspace crop provides a spatial relevance constraint by excluding geometry outside the manipulation region.
Point-cloud targets make this selection explicit in 3D and place observations from different cameras in a common coordinate frame.
Here, full reconstruction refers to the fused target within this workspace.

\paragraph{Training Objective.}
Let $\widehat{G}^{\mathrm{full}}$ and
$\widehat{G}_M^{\mathrm{partial}}$ denote the full and partial predictions.
We supervise both using Chamfer reconstruction loss:
\begin{equation}
\begin{aligned}
    \mathcal{L}_{\mathrm{geo}}
    =
    \mathbb{E}_{M}\Big[
        \lambda_{\mathrm{full}}\,
        \ell_{\mathrm{CD}}\!\left(
            \widehat{G}^{\mathrm{full}},G^{\mathrm{full}}
        \right) 
        +
        \lambda_{\mathrm{partial}}\,
        \ell_{\mathrm{CD}}\!\left(
            \widehat{G}_M^{\mathrm{partial}},
            G_M^{\mathrm{partial}}
        \right)
    \Big].
\end{aligned}
\label{eq:leap_geometry_objective}
\end{equation}
Here, $\lambda_{\mathrm{full}},\lambda_{\mathrm{partial}}\geq 0$ weight the two reconstruction terms, and $\mathbb{E}_M$ averages over the wrist-view sampling rule.
The loss $\ell_{\mathrm{CD}}$ is evaluated with the validity and frame masks.
% \TODO{valid-point masks = validity masks?
% It would be better to stay consistent. Solved}
Padded points are excluded from distance computation.
In practice, we specify a minimum number of valid points for full and partial targets separately, excluding targets below their respective thresholds from the corresponding reconstruction losses.
% Full targets require at least one valid point, and partial targets require at least eight; invalid targets do not contribute to their respective losses.
% \TODO{Those seem to be some ad-hoc chosen hyperparameter values. 
% Could they be justified? Solved}
% Here, $\mathbb{E}_M$ averages over the wrist-view sampling rule. Validity masks exclude padded points from Chamfer-distance computation, while frame masks omit empty full targets and partial targets with insufficient valid points from their respective reconstruction losses.

The complete training objective combines action prediction with both geometric terms:
\begin{equation}
    \mathcal{L}_{\mathrm{\methodname{}}}
    =
    \mathcal{L}_{\mathrm{policy}}
    +
    \mathcal{L}_{\mathrm{geo}}.
\label{eq:leap_training_objective}
\end{equation}
% \TODO{Why only use $\lambda_{\mathrm{geo}}$ here? 
% Above you used two weights.
% I think it would be better to be consistent.}
% Here, $\lambda_{\mathrm{geo}}\geq 0$ weights geometric supervision relative to action prediction.
% Both objectives update the shared visual encoder. 
The policy loss and both reconstruction terms jointly update the shared visual encoder.
At deployment, the policy uses RGB observations and robot state, without privileged point-cloud targets or the auxiliary reconstruction pathway.

\section{Experimental Results}

\subsection{Experimental Setup}
\label{sec:experimental_setup}

\paragraph{Datasets and Tasks.}
We evaluate \methodname{} on 50 RoboTwin tasks, six ManiSkill tasks, and three real-world manipulation tasks.
To examine performance across different geometric requirements, we group the RoboTwin tasks into four categories:
\textbf{Object Spatial} (14),
\textbf{Multi-object} (14),
\textbf{Interaction} (14), and
\textbf{Less-geometry} (8).
We report success rates for each category and the overall average across all 50 tasks.
The ManiSkill and real-world experiments provide complementary evaluations across simulation environments and physical manipulation settings. More details in \Cref{subsec:task}.

\paragraph{Baseline Methods.}
We compare against Action Chunking with Transformers (ACT)~\citep{zhao2023act} and Diffusion Policy (DP)~\citep{chi2023diffusion} as standard RGB-based imitation learning baselines.
We also include eVGGT~\citep{vuong2025evggt}, GAP~\citep{xu2026actiongeometry}, and Spatial Forcing~\citep{li2025spatialforcing} as geometry-aware approaches; DP3~\citep{ze2024dp3}, which directly consumes point-cloud, as a geometry-input upper bound; and $\pi_0$~\citep{black2025pi0} and RDT~\citep{liu2025rdt} as robot foundation models.
For the DP-based comparisons, we integrate the eVGGT visual encoder and adapt Spatial Forcing's representation-alignment strategy to DP, denoting these variants as eVGGT-DP and Spatial Forcing-DP.
We apply \methodname{} to both policy architectures, yielding DP-\methodname{} and ACT-\methodname{}, to evaluate its effectiveness across different action prediction frameworks.

\paragraph{Training and Evaluation.}
Our RGB policies use observations from three cameras, including one head camera and two wrist cameras, together with robot proprioceptive state.
We train DP-based policies for 600 epochs and ACT-based policies for 4,000 epochs, using three independent random seeds.
For \methodname{}, privileged point-cloud supervision is used only during training; inference requires only RGB observations and proprioception.
We summarize performance using task success rates, reporting the mean and standard deviation across seeds. More implementation details in \Cref{sec:imp}.

\begin{table}[t]
\centering
\caption{Overall performance on the 50 RoboTwin tasks across 3 random seeds. Tasks are grouped according to their geometric requirements. DP, RDT, $\pi_0$ use the official results with no SD.}
\label{tab:robotwin_main}

\scriptsize
\setlength{\tabcolsep}{3.0pt} % Slightly reduced padding further to fit extra cols
\begin{tabular}{lccccccc}
\toprule

\multirow{2}{*}{\textbf{Method}} &
\multirow{2}{*}{\textbf{Obs.}}&
\multicolumn{5}{c}{\textbf{Success Rate (\%) by Task Category} $\uparrow$} & 
\multirow{2}{*}{\textbf{Params.} $\downarrow$}\\
\cmidrule(lr){3-7}
& &
\textbf{Object Spatial}(14) &
\textbf{Multi-object}(14) &
\textbf{Interaction}(14) &
\textbf{Less-geometry}(8) & 
\textbf{Average} & \\
\midrule
DP3 (RSS-24) & P.C.
& 45.3$\pm$3.2 & 41.6$\pm$2.2 & 66.9$\pm$5.5 & 80.4$\pm$4.3 & 55.9$\pm$3.7 & 262.43M\\

\midrule
DP (RSS-23) & RGB(1)
& 19.8 & 24.2 & 30.4 & 45.0 & 28.0 & 85.62M\\

DP-MultiView (RSS-23) & RGB
& 18.0$\pm$2.6 & 19.5$\pm$4.9 & 30.2$\pm$4.7 & 44.9$\pm$3.5 & 26.1$\pm$4.0 & 86.35M\\

DP-eVGGT (arxiv-25) & RGB
& 25.1$\pm$3.0 & 25.5$\pm$4.2 & 38.1$\pm$3.7 & 51.3$\pm$4.1 & 33.0$\pm$3.8 & 129.66M\\

RDT (ICLR-25) & RGB
& 26.4 & 21.0 & 46.4 & 51.5 & 34.5 & 1200M\\

DP-SF (ICLR-26) & RGB
& 35.6$\pm$2.0 & 32.5$\pm$2.8 & 50.0$\pm$4.7 & 60.8$\pm$5.1 & 42.8$\pm$3.5 & 129.66M\\

$\pi_0$ (RSS-25) & RGB
& 34.3 & 38.9 & 52.1 & 70.9 & 46.4 & 3300M\\

GAP (CVPR-26) & RGB
& 39.9$\pm$2.5 & 38.9$\pm$4.3 & 54.6$\pm$3.6 & 62.8$\pm$4.9 & 47.4$\pm$3.7 & 1200M\\

\textbf{DP-\methodname{} (Ours)} & RGB
& \textbf{44.1$\pm$2.6} & \textbf{47.1$\pm$3.6} & \textbf{68.8$\pm$3.8} & \textbf{73.4$\pm$4.2} & \textbf{56.5$\pm$3.5} & 87.78M\\

\midrule
ACT-pointcloud & P.C.
& 32.9$\pm$2.4 & 29.3$\pm$2.9 & 58.6$\pm$3.3 & 79.1$\pm$3.6 & 46.5$\pm$3.0 & 262.84M\\

\midrule

ACT (RSS-23) & RGB
& 16.6$\pm$3.0 & 18.8$\pm$3.4 & 38.9$\pm$4.2 & 50.4$\pm$6.9 & 28.9$\pm$4.1 & 83.90M\\

\textbf{ACT-\methodname{} (Ours)} & RGB
& 36.8$\pm$3.5 & 33.2$\pm$2.6 & 58.8$\pm$4.4 & 75.5$\pm$4.0 & 48.1$\pm$3.6 & 85.63M\\

\bottomrule
\end{tabular}

\end{table}

\subsection{Performance Analysis}

\paragraph{Quantitative Evaluations on RoboTwin.}
As shown in \Cref{tab:robotwin_main}, \methodname{} improves both backbones across all four task categories, raising average success from 28.0\% to 56.5\% for DP and from 28.9\% to 48.1\% for ACT, with only 2.5\% and 2.1\% more parameters, respectively. Among RGB policies, DP-\methodname{} achieves the highest average success (56.5\%), outperforming geometry-aware baselines DP-eVGGT (33.0\%), DP-SF (42.8\%), and GAP (47.4\%), as well as RDT (34.5\%) and $\pi_0$ (46.4\%). Compared with GAP, the strongest competing RGB baseline, its advantage holds across all categories and is largest on Interaction tasks (68.8\% vs.\ 54.6\%).

We use DP3 as an empirical upper-bound reference because it directly receives point clouds at inference.
Despite requiring no point-cloud input at deployment, DP-\methodname{} surpasses this reference in average success rate, achieving 56.5\% versus 55.9\%.
ACT-\methodname{} likewise exceeds its point-cloud counterpart, achieving 48.1\% compared with 46.5\% for ACT-pointcloud.
These results show that our training-only geometry supervision enables RGB policies to reach, and slightly surpass, the average performance of their counterparts with direct access to 3D observations.

\begin{table}[t]
    \centering
    \caption{Success Rate(\%) on the 6 Maniskill tasks across 3 random seeds.}
    \label{tab:maniskill_main}
    \footnotesize
    \setlength{\tabcolsep}{4pt}
    \renewcommand{\arraystretch}{1.15}
    \begin{tabular}{@{}lccccccc@{}}
        \toprule
        Method & PickCube & StackCube &
        \shortstack{Peg\\Insertion} &
        \shortstack{Plug\\Charger} &
        \shortstack{PullCube\\Tool} & PushT & Average \\
        \midrule
        DP3 & 45 $\pm$ 2 & 71 $\pm$ 2 & 7 $\pm$ 3 &
        13 $\pm$ 2 & 91 $\pm$ 4 & 34 $\pm$ 2 & 43.3 $\pm$ 1.5 \\
        \midrule
        DP & 32 $\pm$ 4 & 41 $\pm$ 1 & 0 $\pm$ 1 &
        0 $\pm$ 0 & 68 $\pm$ 3 & 18 $\pm$ 3 & 26.4 $\pm$ 1.7 \\
        DP-SF & 43 $\pm$ 3 & 55 $\pm$ 5 & \textbf{6} $\pm$ 1 &
        6 $\pm$ 1 & 85 $\pm$ 2 & 21 $\pm$ 1 & 35.9 $\pm$ 1.2 \\
        RDT & 40 $\pm$ 2 & 61 $\pm$ 4 & 4 $\pm$ 1 &
        8 $\pm$ 4 & 88 $\pm$ 3 & 20 $\pm$ 4 & 36.8 $\pm$ 1.3 \\
        DP-\methodname{} & \textbf{49} $\pm$ 2 & \textbf{66} $\pm$ 4 &
        \textbf{6} $\pm$ 2 & \textbf{9} $\pm$ 2 &
        \textbf{96} $\pm$ 2 & \textbf{31} $\pm$ 2 &
        \textbf{42.5} $\pm$ 1.1 \\
        \midrule
        ACT-pointcloud & 41 $\pm$ 2 & 57 $\pm$ 6 & 4 $\pm$ 1 &
        5 $\pm$ 2 & 63 $\pm$ 5 & 34 $\pm$ 4 & 34.1 $\pm$ 1.4 \\
        \midrule
        ACT & 19 $\pm$ 1 & 30 $\pm$ 2 & 0 $\pm$ 1 &
        1 $\pm$ 1 & 45 $\pm$ 2 & 25 $\pm$ 2 & 20.0 $\pm$ 1.3 \\
        ACT-\methodname{} & \textbf{40} $\pm$ 2 & \textbf{50} $\pm$ 1 &
        \textbf{4} $\pm$ 1 & \textbf{8} $\pm$ 1 &
        \textbf{69} $\pm$ 3 & \textbf{28} $\pm$ 4 &
        \textbf{33.2} $\pm$ 1.7 \\
        \bottomrule
    \end{tabular}
\end{table}

\paragraph{Quantitative Evaluations on Maniskill.}
As shown in \Cref{tab:maniskill_main}, \methodname{} improves both policy backbones across all six ManiSkill tasks.
Average success increases from 26.4\% to 42.5\% for DP and from 20.0\% to 33.2\% for ACT.
DP-\methodname{} achieves the highest average success among the evaluated RGB policies, outperforming DP-SF (35.9\%) and RDT (36.8\%) while matching or exceeding their success rates on every task.
These improvements substantially narrow the gap to the point-cloud policies, which serve as empirical upper-bound references with direct access to 3D observations.
DP-\methodname{} approaches DP3 in average success (42.5\% versus 43.3\%), while ACT-\methodname{} similarly approaches ACT-pointcloud (33.2\% versus 34.1\%).
Both variants also exceed their point-cloud counterparts on PullCubeTool, demonstrating competitive manipulation performance without requiring point clouds at deployment.

\begin{table}[h]
    % \captionsetup{
    %     font=small,
    %     skip=4pt,
    %     justification=raggedright,
    %     singlelinecheck=false
    % }
    \centering
    \begin{minipage}[t]{0.53\linewidth}
        \vspace{0pt}
        \caption{Ablation: mean success (\%) over five tasks.
        Task-level results appear in Table~\ref{tab:ablation-full}.}
        \label{tab:ablation}
        \small
        \setlength{\tabcolsep}{3pt}
        \renewcommand{\arraystretch}{1.10}
        \begin{tabular*}{\linewidth}{
            @{\extracolsep{\fill}}lrr@{}
        }
            \toprule
            LEAP variant & DP & ACT \\
            \midrule
            Full
                & \textbf{76.6} & \textbf{49.0} \\
            w/o wrist dropout
                & 55.2 & 38.6 \\
            w/ state in decoder
                & 58.2 & 36.8 \\
            w/ depth reconstruction
                & 71.0 & 45.4 \\
            w/o target alignment
                & 62.0 & 44.4 \\
            \bottomrule
        \end{tabular*}
    \end{minipage}\hfill
    \begin{minipage}[t]{0.44\linewidth}
        \vspace{0pt}
        \caption{Real-world success rates (\%).
        Bold marks the better RGB result.}
        \label{tab:realworld}
        \small
        \setlength{\tabcolsep}{2.3pt}
        \renewcommand{\arraystretch}{1.10}
        \begin{tabular*}{\linewidth}{
            @{\extracolsep{\fill}}lrrr@{}
        }
            \toprule
            Task & DP3 & DP & DP-LEAP \\
            \midrule
            Pick-Place
                & 95 & 35 & \textbf{90} \\
            Assemble
                & 70 & 15 & \textbf{85} \\
            Handover-Block
                & 30 & 0 & \textbf{30} \\
            \midrule
            Average
                & 65.0 & 16.7 & \textbf{68.3} \\
            \bottomrule
        \end{tabular*}
    \end{minipage}
\end{table}

% \begin{table}[t]
%     \centering
%     \caption{Success rates (\%) on three real-world manipulation tasks.}
%     \label{tab:realworld}
%     \small
%     \setlength{\tabcolsep}{6pt}
%     \renewcommand{\arraystretch}{1.12}
%     \begin{tabular}{@{}lcccc@{}}
%         \toprule
%         Method &
%         \shortstack{Pick\\Place} &
%         \shortstack{Assemble} &
%         \shortstack{Handover\\Block} &
%         Average \\
%         \midrule
%         DP3       & 95 & 70 & 15 & 58.3 \\
%         \midrule
%         DP        & 35 & 15 & 0  & 16.7 \\
%         DP-\methodname{} &
%         \textbf{90} & \textbf{85} & \textbf{15} & \textbf{63.3} \\
%         \bottomrule
%     \end{tabular}
% \end{table}
% \paragraph{Real-world Experiments.}
% As shown in \Cref{tab:realworld}, \methodname{} improves DP across all three real-world tasks, increasing average success from 16.7\% to 68.3\%.
% The largest improvement occurs on Assemble, where success rises from 15\% to 85\%.
% Without requiring point clouds at deployment, DP-\methodname{} also surpasses DP3 in average success (68.3\% versus 65.0\%).
% It achieves 90\% on PickPlace, compared with DP3's 95\% and achieves higher success on Assemble (85\% versus 70\%).
% HandoverBlock remains challenging, with our method matching DP3 at 30\%. Real-world demos are in our supplementary materials.

\paragraph{Real-world Experiments.}
As shown in \Cref{tab:realworld}, \methodname{} improves DP across all three real-world tasks, raising average success from 16.7\% to 68.3\%. The largest gain occurs on Assemble, from 15\% to 85\%. Without point clouds at deployment, DP-\methodname{} also slightly surpasses DP3 on average (68.3\% vs.\ 65.0\%), achieving 90\% vs.\ 95\% on Pick-Place, 85\% vs.\ 70\% on Assemble, and matching DP3 at 30\% on the challenging Handover-Block task. 
% Real-world demonstrations are provided in the supplementary materials.

\paragraph{Ablation Study.}
\Cref{tab:ablation} shows that each design contributes to average performance.
Removing wrist dropout reduces average success from 76.6\% to 55.2\% for DP and from 49.0\% to 38.6\% for ACT.
Retaining wrist dropout but removing target alignment yields 62.0\% and 44.4\%, respectively, demonstrating the additional benefit of matching geometric supervision to the available views.
Adding robot state to the geometry decoder lowers success to 58.2\% and 36.8\%, supporting state-separated visual reconstruction.
Replacing point-cloud reconstruction with depth reconstruction yields 71.0\% for DP and 45.4\% for ACT, indicating that point-cloud supervision provides a more effective training signal.
See full results in \Cref{subsec:full-a} and sensitivity analysis in \Cref{subsec:sen}.

\section{Conclusion and Limitation}

We presented \methodname{}, which improves RGB visuomotor learning through visual-only reconstruction, workspace-constrained targets, and wrist-view dropout with aligned partial supervision alongside full reconstruction.
Across abundant experiments, it improves Diffusion Policy and ACT in all four categories, achieving average performance comparable to their point-cloud counterparts while requiring only RGB and robot state at deployment.
Ablations support each component, highlighting the value of observation-aligned, task-relevant geometric supervision.
The current implementation uses predefined workspace bounds; automating their selection remains future work.

% \section*{AI Use Statement}

% Generative AI tools were used to assist with language polishing, improving the clarity and presentation of the manuscript, and supporting literature retrieval and discovery. All technical content, research ideas, experimental design, analysis, and conclusions were developed and verified by the authors. The authors take full responsibility for the content of the paper.

% \section*{Reproducibility Statement}

% To support reproducibility, we provide our code implementation in the supplementary materials. Detailed hyperparameters, training configurations, data collection procedures, and evaluation protocols are documented in \Cref{sec:imp}. The official codebase will be made publicly available upon acceptance.

% \subsubsection*{Author Contributions}
% If you'd like to, you may include  a section for author contributions as is done
% in many journals. This is optional and at the discretion of the authors.

% \subsubsection*{Acknowledgments}
% Use unnumbered third level headings for the acknowledgments. All
% acknowledgments, including those to funding agencies, go at the end of the paper.

\bibliography{main}

@inproceedings{chi2023diffusion,
  title     = {Diffusion Policy: Visuomotor Policy Learning via Action Diffusion},
  author    = {Chi, Cheng and Feng, Siyuan and Du, Yilun and Xu, Zhenjia
               and Cousineau, Eric and Burchfiel, Benjamin C. M.
               and Song, Shuran},
  booktitle = {Proceedings of Robotics: Science and Systems},
  year      = {2023},
  address   = {Daegu, Republic of Korea},
  month     = {July},
  doi       = {10.15607/RSS.2023.XIX.026}
}

@inproceedings{zhao2023act,
  title     = {Learning Fine-Grained Bimanual Manipulation with Low-Cost Hardware},
  author    = {Zhao, Tony Z. and Kumar, Vikash and Levine, Sergey and Finn, Chelsea},
  booktitle = {Proceedings of Robotics: Science and Systems},
  year      = {2023},
  address   = {Daegu, Republic of Korea},
  month     = {July},
  doi       = {10.15607/RSS.2023.XIX.016}
}

@inproceedings{ze2024dp3,
  title     = {3D Diffusion Policy: Generalizable Visuomotor Policy Learning
               via Simple 3D Representations},
  author    = {Ze, Yanjie and Zhang, Gu and Zhang, Kangning and Hu, Chenyuan
               and Wang, Muhan and Xu, Huazhe},
  booktitle = {Proceedings of Robotics: Science and Systems},
  year      = {2024},
  address   = {Delft, Netherlands},
  month     = {July},
  doi       = {10.15607/RSS.2024.XX.067}
}

@article{li2025qdepth,
  title   = {{QDepth-VLA}: Quantized Depth Prediction as Auxiliary
             Supervision for Vision-Language-Action Models},
  author  = {Li, Yixuan and Chen, Yuhui and Zhou, Mingcai and Li, Haoran
             and Zhang, Zhengtao and Zhao, Dongbin},
  journal = {arXiv preprint arXiv:2510.14836},
  year    = {2025}
}

@inproceedings{he2026foresight,
  title     = {3D Dynamics-Aware Manipulation:
               Endowing Manipulation Policies with 3D Foresight},
  author    = {He, Yuxin and Zhang, Ruihao and Wu, Xianzu and Zhang, Zhiyuan
               and Ding, Cheng and Nie, Qiang},
  booktitle = {IEEE International Conference on Robotics and Automation},
  year      = {2026}
}

@article{kim2026pri4r,
  title   = {{Pri4R}: Learning World Dynamics for Vision-Language-Action
             Models with Privileged 4D Representation},
  author  = {Kim, Jisoo and Cho, Jungbin and Chu, Sanghyeok and Bal, Ananya
             and Kim, Jinhyung and Lee, Gunhee and Lee, Sihaeng
             and Kim, Seung Hwan and Han, Bohyung and Lee, Hyunmin
             and Jeni, Laszlo A. and Kim, Seungryong},
  journal = {arXiv preprint arXiv:2603.01549},
  year    = {2026}
}

@article{li2025spatialforcing,
  title   = {Spatial Forcing: Implicit Spatial Representation Alignment for Vision-Language-Action Model},
  author  = {Li, Fuhao and Song, Wenxuan and Zhao, Han and Wang, Jingbo
             and Ding, Pengxiang and Wang, Donglin and Zeng, Long
             and Li, Haoang},
  journal = {arXiv preprint arXiv:2510.12276},
  year    = {2025}
}

@article{vuong2025evggt,
  title   = {Improving Robotic Manipulation with Efficient
             Geometry-Aware Vision Encoder},
  author  = {Vuong, An Dinh and Vu, Minh Nhat and Reid, Ian},
  journal = {arXiv preprint arXiv:2509.15880},
  year    = {2025}
}

@article{zhao2025statefree,
  title   = {Do You Need Proprioceptive States in Visuomotor Policies?},
  author  = {Zhao, Juntu and Lu, Wenbo and Zhang, Di and Liu, Yufeng
             and Liang, Yushen and Zhang, Tianluo and Cao, Yifeng
             and Xie, Junyuan and Hu, Yingdong and Wang, Shengjie
             and Guo, Junliang and Wang, Dequan and Gao, Yang},
  journal = {arXiv preprint arXiv:2509.18644},
  year    = {2025}
}

@inproceedings{seo2023mvmwm,
  title     = {Multi-View Masked World Models for Visual Robotic Manipulation},
  author    = {Seo, Younggyo and Kim, Junsu and James, Stephen and Lee, Kimin
               and Shin, Jinwoo and Abbeel, Pieter},
  booktitle = {Proceedings of the 40th International Conference on Machine Learning},
  volume    = {202},
  pages     = {30613--30632},
  publisher = {PMLR},
  year      = {2023}
}

@article{almuzairee2025mad,
  title   = {Merging and Disentangling Views in Visual Reinforcement
             Learning for Robotic Manipulation},
  author  = {Almuzairee, Abdulaziz and Patil, Rohan and Bhatt, Dwait
             and Christensen, Henrik I.},
  journal = {arXiv preprint arXiv:2505.04619},
  year    = {2025}
}

@article{wang2026track4action,
  title   = {{Track4Action}: Distilling World-Centric 3D Tracker into Vision-Language-Action Policies},
  author  = {Wang, Chenyi and Wang, Xinkai and Lin, Bokai and Tian, Jialin and Zhang, Fucheng and Lu, Cewu and Yang, Lixin},
  journal = {arXiv preprint arXiv:2608.03727},
  year    = {2026},
  url     = {https://arxiv.org/abs/2608.03727},
  note    = {arXiv:2608.03727v1, submitted 2026-08-04}
}

@article{yang2026mvucf,
  title   = {Multi-View Unified Camera Fields: Geometry-Shaped Action-Facing Representations for RGB-Only Multi-Camera VLA Policies},
  author  = {Yang, Jiarui and Lu, Yehao and Su, Yuning and Xie, Yufeng and Zhong, Yu and Lan, Haiyu and Hao, Tianjing and Lu, Kaixiang and Lin, Peiwen and Wang, Chuang and Li, Enyu and Liang, Junwei},
  journal = {arXiv preprint arXiv:2608.01826},
  year    = {2026},
  url     = {https://arxiv.org/abs/2608.01826},
  note    = {arXiv:2608.01826v1, submitted 2026-08-03}
}

@article{he2026elan4d,
  title   = {{ELAN4D}: Embodiment-Centric 4D Supervision for Vision-Language-Action Models via Plug-and-Play Adaptation},
  author  = {He, Zeyuan and Yang, Bowen and Fang, Zhirui and Zhou, Keru and Jiang, Lei and Qian, Jingjing and Mo, Fan and Yan, Junchi and Torr, Philip and Li, Xiu and Jiang, Li and Yu, Jialin},
  journal = {arXiv preprint arXiv:2605.30484},
  year    = {2026},
  url     = {https://arxiv.org/abs/2605.30484},
  note    = {arXiv:2605.30484v1}
}

@article{ding2026mindvla,
  title   = {{Mind-VLA}: Instruction-Aware Spatial Representation Alignment for Vision-Language-Action Models},
    author={Xingyu Ding and Yuzhong Zhao and Yang Wu and Chunhai Zhao and Chaoyang Zhao and Yifan Zhang and Jian Cheng},
  journal = {arXiv preprint arXiv:2608.04633},
  year    = {2026},
  url     = {https://arxiv.org/abs/2608.04633},
  note    = {arXiv:2608.04633v1, submitted 2026-08-05}
}

@article{shi2026thinkvla,
  title   = {{3DThinkVLA}: Endowing Vision-Language-Action Models with Latent 3D Priors via 3D-Thinking-Guided Co-training},
  author  = {Shi, Jiaxin and Zhang, Xidong and Zhu, Fucai and Li, Zhe and Zhu, Siyu and Yuan, Weihao},
  journal = {arXiv preprint arXiv:2606.04436},
  year    = {2026},
  url     = {https://arxiv.org/abs/2606.04436},
  note    = {arXiv:2606.04436v1}
}

@article{qian2025geopredict,
  title   = {{GeoPredict}: Leveraging Predictive Kinematics and 3D Gaussian Geometry for Precise VLA Manipulation},
  author  = {Qian, Jingjing and Han, Boyao and Shi, Chen and Xiao, Lei and Yang, Long and Shi, Shaoshuai and Jiang, Li},
  journal = {arXiv preprint arXiv:2512.16811},
  year    = {2025},
  url     = {https://arxiv.org/abs/2512.16811}
}

@article{wang2026vega,
  title   = {{VEGA}: Visual Encoder Grounding Alignment for Spatially-Aware Vision-Language-Action Models},
  author  = {Wang, Hao and Wei, Xiaobao and He, Jingyang and Bai, Chengyu and Fan, Chun-Kai and Cao, Jiajun and Chen, Jintao and Li, Ying and Rong, Shanyu and Lu, Ming and Ju, Xiaozhu and Tang, Jian and Zhang, Shanghang},
  journal = {arXiv preprint arXiv:2605.10485},
  year    = {2026},
  url     = {https://arxiv.org/abs/2605.10485}
}

@article{qian2025gp3,
  title   = {{GP3}: A 3D Geometry-Aware Policy with Multi-View Images for Robotic Manipulation},
  author  = {Qian, Quanhao and Zhao, Guoyang and Zhang, Gongjie and Wang, Jiuniu and Xu, Ran and Gao, Junlong and Zhao, Deli},
  journal = {arXiv preprint arXiv:2509.15733},
  year    = {2025},
  url     = {https://arxiv.org/abs/2509.15733}
}

@article{xu2026actiongeometry,
  title   = {Action-Geometry Prediction with 3D Geometric Prior for Bimanual Manipulation},
  author  = {Xu, Chongyang and Li, Haipeng and Cheng, Shen and Hu, Jingyu and Fan, Haoqiang and Feng, Ziliang and Liu, Shuaicheng},
  journal = {arXiv preprint arXiv:2602.23814},
  year    = {2026},
  url     = {https://arxiv.org/abs/2602.23814}
}

@article{li2025pointvla,
  title   = {{PointVLA}: Injecting the 3D World into Vision-Language-Action Models},
  author  = {Li, Chengmeng and Wen, Junjie and Peng, Yan and Peng, Yaxin and Feng, Feifei and Zhu, Yichen},
  journal = {arXiv preprint arXiv:2503.07511},
  year    = {2025},
  url     = {https://arxiv.org/abs/2503.07511}
}

@article{lu2026gap,
  title   = {When Would Vision-Proprioception Policies Fail in Robotic Manipulation?},
  author  = {Lu, Jingxian and Xia, Wenke and Wu, Yuxuan and Lu, Zhiwu and Hu, Di},
  journal = {arXiv preprint arXiv:2602.12032},
  year    = {2026},
  url     = {https://arxiv.org/abs/2602.12032},
  note    = {arXiv:2602.12032v1; accepted by ICLR 2026}
}

@article{li2026revip,
  title   = {{ReViP}: Mitigating False Completion in Vision-Language-Action Models with Vision-Proprioception Rebalance},
  author  = {Li, Zhuohao and Li, Yinghao and Jiang, Jian-Jian and Zhou, Lang and Zhang, Tianyu and Yin, Jiadong and Lin, Mu and Wei, Yi-Lin and Zheng, Wei-Shi},
  journal = {arXiv preprint arXiv:2601.16667},
  year    = {2026},
  url     = {https://arxiv.org/abs/2601.16667}
}

@article{xie2026multicamera,
  title   = {Multi-Camera View Scaling for Data-Efficient Robot Imitation Learning},
  author  = {Xie, Yichen and Wang, Yixiao and Zhao, Shuqi and Wu, Cheng-En and Tomizuka, Masayoshi and Xie, Jianwen and Fang, Hao-Shu},
  journal = {arXiv preprint arXiv:2604.00557},
  year    = {2026},
  url     = {https://arxiv.org/abs/2604.00557},
  note    = {arXiv:2604.00557v1}
}

@article{jangir2022lookcloser,
  title   = {Look Closer: Bridging Egocentric and Third-Person Views with Transformers for Robotic Manipulation},
  author  = {Jangir, Rishabh and Hansen, Nicklas and Ghosal, Sambaran and Jain, Mohit and Wang, Xiaolong},
  journal = {arXiv preprint arXiv:2201.07779},
  year    = {2022},
  url     = {https://arxiv.org/abs/2201.07779}
}

@article{zhu2024spa,
  title   = {{SPA}: 3D Spatial-Awareness Enables Effective Embodied Representation},
  author  = {Zhu, Haoyi and Yang, Honghui and Wang, Yating and Yang, Jiange and Wang, Limin and He, Tong},
  journal = {arXiv preprint arXiv:2410.08208},
  year    = {2024},
  url     = {https://arxiv.org/abs/2410.08208},
  note    = {arXiv:2410.08208v3}
}

@article{wang2024dpr,
  title   = {Visual Robotic Manipulation with Depth-Aware Pretraining},
  author  = {Wang, Wanying and Li, Jinming and Zhu, Yichen and Xu, Zhiyuan and Che, Zhengping and Peng, Yaxin and Shen, Chaomin and Liu, Dong and Feng, Feifei and Tang, Jian},
  journal = {arXiv preprint arXiv:2401.09038},
  year    = {2024},
  url     = {https://arxiv.org/abs/2401.09038}
}

@article{qian2024threedmvp,
  title   = {{3D-MVP}: 3D Multiview Pretraining for Robotic Manipulation},
  author  = {Qian, Shengyi and Mo, Kaichun and Blukis, Valts and Fouhey, David F. and Fox, Dieter and Goyal, Ankit},
  journal = {arXiv preprint arXiv:2406.18158},
  year    = {2024},
  url     = {https://arxiv.org/abs/2406.18158}
}

@inproceedings{zhang2026diffuview,
  title     = {{DiffuView}: Multi-View Diffusion Pretraining for 3D Aware Robotic Manipulation},
  author    = {Zhang, Kaizhao and Niu, Tian and Liu, Tianyu and Guo, Chenen and Xu, Zijun and Hu, Qingda and Ding, Wenchao},
  booktitle = {Proceedings of the IEEE/CVF Conference on Computer Vision and Pattern Recognition (CVPR)},
  year      = {2026},
  url       = {https://openaccess.thecvf.com/content/CVPR2026/html/Zhang_DiffuView_Multi-View_Diffusion_Pretraining_for_3D_Aware_Robotic_Manipulation_CVPR_2026_paper.html}
}

@article{pang2024depthhelps,
  title   = {Depth Helps: Improving Pre-Trained RGB-Based Policy with Depth Information Injection},
  author  = {Pang, Xincheng and Xia, Wenke and Wang, Zhigang and Zhao, Bin and Hu, Di and Wang, Dong and Li, Xuelong},
  journal = {arXiv preprint arXiv:2408.05107},
  year    = {2024},
  url     = {https://arxiv.org/abs/2408.05107}
}

@inproceedings{shridhar2023peract,
  title     = {Perceiver-Actor: A Multi-Task Transformer for Robotic Manipulation},
  author    = {Shridhar, Mohit and Manuelli, Lucas and Fox, Dieter},
  booktitle = {Proceedings of the 6th Conference on Robot Learning},
  volume    = {205},
  series    = {Proceedings of Machine Learning Research},
  pages     = {785--799},
  publisher = {PMLR},
  year      = {2023},
  url       = {https://proceedings.mlr.press/v205/shridhar23a.html}
}

@article{goyal2023rvt,
  title   = {{RVT}: Robotic View Transformer for 3D Object Manipulation},
  author  = {Goyal, Ankit and Xu, Jie and Guo, Yijie and Blukis, Valts and Chao, Yu-Wei and Fox, Dieter},
  journal = {arXiv preprint arXiv:2306.14896},
  year    = {2023},
  url     = {https://arxiv.org/abs/2306.14896}
}

@article{gervet2023act3d,
  title   = {{Act3D}: 3D Feature Field Transformers for Multi-Task Robotic Manipulation},
  author  = {Gervet, Theophile and Xian, Zhou and Gkanatsios, Nikolaos and Fragkiadaki, Katerina},
  journal = {arXiv preprint arXiv:2306.17817},
  year    = {2023},
  url     = {https://arxiv.org/abs/2306.17817}
}

@article{ke2024diffuseractor,
  title   = {3D Diffuser Actor: Policy Diffusion with 3D Scene Representations},
  author  = {Ke, Tsung-Wei and Gkanatsios, Nikolaos and Fragkiadaki, Katerina},
  journal = {arXiv preprint arXiv:2402.10885},
  year    = {2024},
  url     = {https://arxiv.org/abs/2402.10885}
}

@article{wang2024rise,
  title   = {{RISE}: 3D Perception Makes Real-World Robot Imitation Simple and Effective},
  author  = {Wang, Chenxi and Fang, Hongjie and Fang, Hao-Shu and Lu, Cewu},
  journal = {arXiv preprint arXiv:2404.12281},
  year    = {2024},
  url     = {https://arxiv.org/abs/2404.12281}
}

@article{jia2025pointmappolicy,
  title   = {{PointMapPolicy}: Structured Point Cloud Processing for Multi-Modal Imitation Learning},
  author  = {Jia, Xiaogang and Wang, Qian and Wang, Anrui and Wang, Han A. and Gyenes, Bal{\'a}zs and Gospodinov, Emiliyan and Jiang, Xinkai and Li, Ge and Zhou, Hongyi and Liao, Weiran and Huang, Xi and Beck, Maximilian and Reuss, Moritz and Lioutikov, Rudolf and Neumann, Gerhard},
  journal = {arXiv preprint arXiv:2510.20406},
  year    = {2025},
  url     = {https://arxiv.org/abs/2510.20406}
}

@misc{yang2026robo3r,
      title={Robo3R: Enhancing Robotic Manipulation with Accurate Feed-Forward 3D Reconstruction}, 
      author={Sizhe Yang and Linning Xu and Hao Li and Juncheng Mu and Jia Zeng and Dahua Lin and Jiangmiao Pang},
      year={2026},
      eprint={2602.10101},
      archivePrefix={arXiv},
      primaryClass={cs.RO},
      url={https://arxiv.org/abs/2602.10101}, 
}

@inproceedings{black2025pi0,
  title     = {{$\pi_0$}: A Vision-Language-Action Flow Model
               for General Robot Control},
  author    = {Black, Kevin and Brown, Noah and Driess, Danny
               and Esmail, Adnan and Equi, Michael and Finn, Chelsea
               and Fusai, Niccolo and Groom, Lachy and Hausman, Karol
               and Ichter, Brian and Jakubczak, Szymon and Jones, Tim
               and Ke, Liyiming and Levine, Sergey and Li-Bell, Adrian
               and Mothukuri, Mohith and Nair, Suraj and Pertsch, Karl
               and Shi, Lucy Xiaoyang and Tanner, James and Vuong, Quan
               and Walling, Anna and Wang, Haohuan and Zhilinsky, Ury},
  booktitle = {Proceedings of Robotics: Science and Systems},
  year      = {2025},
  url       = {https://arxiv.org/abs/2410.24164}
}

@inproceedings{liu2025rdt,
  title     = {{RDT-1B}: A Diffusion Foundation Model
               for Bimanual Manipulation},
  author    = {Liu, Songming and Wu, Lingxuan and Li, Bangguo
               and Tan, Hengkai and Chen, Huayu and Wang, Zhengyi
               and Xu, Ke and Su, Hang and Zhu, Jun},
  booktitle = {International Conference on Learning Representations},
  year      = {2025},
  url       = {https://openreview.net/forum?id=yAzN4tz7oI}
}

@inproceedings{ze2023gnfactor,
  title = {{GNFactor}: Multi-Task Real Robot Learning with Generalizable Neural Feature Fields},
  author = {Ze, Yanjie and Yan, Ge and Wu, Yueh-Hua and Macaluso, Annabella and Ge, Yuying and Ye, Jianglong and Hansen, Nicklas and Li, Li Erran and Wang, Xiaolong},
  booktitle = {Proceedings of The 7th Conference on Robot Learning},
  volume = {229},
  series = {Proceedings of Machine Learning Research},
  pages = {284--301},
  publisher = {PMLR},
  year = {2023},
  url = {https://proceedings.mlr.press/v229/ze23a.html}
}

@inproceedings{zhu2023groot,
  title = {Learning Generalizable Manipulation Policies with Object-Centric {3D} Representations},
  author = {Zhu, Yifeng and Jiang, Zhenyu and Stone, Peter and Zhu, Yuke},
  booktitle = {Proceedings of The 7th Conference on Robot Learning},
  volume = {229},
  series = {Proceedings of Machine Learning Research},
  pages = {3418--3433},
  publisher = {PMLR},
  year = {2023},
  url = {https://proceedings.mlr.press/v229/zhu23b.html}
}

@article{jia2024lift3d,
  title = {{Lift3D} Foundation Policy: Lifting {2D} Large-Scale Pretrained Models for Robust {3D} Robotic Manipulation},
  author = {Jia, Yueru and Liu, Jiaming and Chen, Sixiang and Gu, Chenyang and Wang, Zhilue and Luo, Longzan and Lee, Lily and Wang, Pengwei and Wang, Zhongyuan and Zhang, Renrui and Zhang, Shanghang},
  journal = {arXiv preprint arXiv:2411.18623},
  year = {2024},
  url = {https://arxiv.org/abs/2411.18623}
}

@inproceedings{wang2025monolift,
  title = {{MonoLift}: Learning {3D} Manipulation Policies from Monocular {RGB} via Distillation},
  author = {Wang, Ziru and Wang, Mengmeng and Dai, Guang and Long, Yongliu and Wang, Jingdong},
  booktitle = {Advances in Neural Information Processing Systems},
  volume = {38},
  year = {2025},
  url = {https://papers.nips.cc/paper_files/paper/2025/file/07bdb30a6aa9792c404d5d62b626f9d7-Paper-Conference.pdf}
}

@article{ze2023rl3d,
  title = {Visual Reinforcement Learning with Self-Supervised {3D} Representations},
  author = {Ze, Yanjie and Hansen, Nicklas and Chen, Yinbo and Jain, Mohit and Wang, Xiaolong},
  journal = {IEEE Robotics and Automation Letters},
  year = {2023},
  url = {https://arxiv.org/abs/2210.07241}
}

@inproceedings{zhou2023spartn,
  title = {{NeRF} in the Palm of Your Hand: Corrective Augmentation for Robotics via Novel-View Synthesis},
  author = {Zhou, Allan and Kim, Moo Jin and Wang, Lirui and Florence, Pete and Finn, Chelsea},
  booktitle = {Proceedings of the IEEE/CVF Conference on Computer Vision and Pattern Recognition},
  year = {2023},
  url = {https://openaccess.thecvf.com/content/CVPR2023/html/Zhou_NeRF_in_the_Palm_of_Your_Hand_Corrective_Augmentation_for_CVPR_2023_paper.html}
}

@article{liu2026sam3dalignment,
  title = {{SAM3D}-Guided Object-Centric Representation Alignment for Vision-Language-Action Models},
  author = {Liu, Zonghe and Jie, Shanyuan and Sun, Xiaoquan and Cao, Chen and Xu, Zetian and Liu, Zongsheng and Chen, Jiayu},
  journal = {arXiv preprint arXiv:2607.25912},
  year = {2026},
  url = {https://arxiv.org/abs/2607.25912}
}

@article{song2025reconvla,
  title = {{ReconVLA}: Reconstructive Vision-Language-Action Model as Effective Robot Perceiver},
  author = {Song, Wenxuan and Zhou, Ziyang and Zhao, Han and Chen, Jiayi and Ding, Pengxiang and Yan, Haodong and Huang, Yuxin and Tang, Feilong and Wang, Donglin and Li, Haoang},
  journal = {arXiv preprint arXiv:2508.10333},
  year = {2025},
  url = {https://arxiv.org/abs/2508.10333}
}

@inproceedings{akinola2020multicamera,
  title = {Learning Precise {3D} Manipulation from Multiple Uncalibrated Cameras},
  author = {Akinola, Iretiayo and Varley, Jacob and Kalashnikov, Dmitry},
  booktitle = {IEEE International Conference on Robotics and Automation},
  year = {2020},
  url = {https://arxiv.org/abs/2002.09107}
}

@inproceedings{hsu2022hands,
  title = {Vision-Based Manipulators Need to Also See from Their Hands},
  author = {Hsu, Kyle and Kim, Moo Jin and Rafailov, Rafael and Wu, Jiajun and Finn, Chelsea},
  booktitle = {International Conference on Learning Representations},
  year = {2022},
  url = {https://arxiv.org/abs/2203.12677}
}

@article{dunion2024multiview,
  title = {Multi-view Disentanglement for Reinforcement Learning with Multiple Cameras},
  author = {Dunion, Mhairi and Albrecht, Stefano V.},
  journal = {Reinforcement Learning Journal},
  volume = {2},
  pages = {498--515},
  year = {2024},
  url = {https://arxiv.org/abs/2404.14064}
}

@inproceedings{zhang2026cortical,
  title = {Cortical Policy: A Dual-Stream View Transformer for Robotic Manipulation},
  author = {Zhang, Xuening and Lv, Qi and Deng, Xiang and Zhang, Miao and Liu, Xingbo and Nie, Liqiang},
  booktitle = {International Conference on Learning Representations},
  year = {2026},
  url = {https://arxiv.org/abs/2603.21051}
}

@inproceedings{lin2024spawnnet,
  title = {{SpawnNet}: Learning Generalizable Visuomotor Skills from Pre-trained Networks},
  author = {Lin, Xingyu and So, John and Mahalingam, Sashwat and Liu, Fangchen and Abbeel, Pieter},
  booktitle = {IEEE International Conference on Robotics and Automation},
  year = {2024},
  url = {https://arxiv.org/abs/2307.03567}
}

@inproceedings{dehaan2019causal,
  title = {Causal Confusion in Imitation Learning},
  author = {de Haan, Pim and Jayaraman, Dinesh and Levine, Sergey},
  booktitle = {Advances in Neural Information Processing Systems},
  volume = {32},
  year = {2019},
  url = {https://papers.nips.cc/paper_files/paper/2019/hash/947018640bf36a2bb609d3557a285329-Abstract.html}
}

@article{zheng2026gift,
  title = {{GIFT}: Guided Intermediate Feature Training via Action-Oriented Structural Supervision for Robotic Manipulation},
  author = {Zheng, Yupeng and Li, Xiang and Gu, Songen and Zheng, Yuhang and Tian, Shuai and Li, Weize and Wang, Linbo and Li, Chaoyue and Zhang, Qichao and Li, Haoran and Xia, Zhongpu and Zhang, Ya-Qin and Yan, Shuicheng and Zhao, Dongbin},
  journal = {arXiv preprint arXiv:2609.04193},
  year = {2026},
  url = {https://arxiv.org/abs/2609.04193}
}

@article{gao2026focalvla,
  title = {{FOCAL-VLA}: Subtask-Guided Geometry Distillation and Implicit World Modeling for Vision-Language-Action Models},
  author = {Gao, Zhiyuan and Wen, Di and Zhan, Yanxiang and Khoshnazar, Mohammad and Sch\"afer, Jeroen and Peng, Kunyu and Beetz, Michael},
  journal = {arXiv preprint arXiv:2609.21228},
  year = {2026},
  url = {https://arxiv.org/abs/2609.21228}
}

@article{jiang2026gaussiandreampp,
  title = {{GaussianDream++}: Efficient {3D} Gaussian World Modeling for Robotic Manipulation},
  author = {Jiang, Yuqing and Zhang, Zijian and Zhou, Weitao and Wang, Jiawei and He, Junjie and Yang, Lei and Qing, Haifang and Liu, Si and Zhao, Ding and Luo, Ping and Yu, Haibao},
  journal = {arXiv preprint arXiv:2608.25659},
  year = {2026},
  url = {https://arxiv.org/abs/2608.25659}
}

@article{zhang2026gaussianwam,
  title = {{GaussianWAM}: Distilling Geometry and Semantics from {3D} Gaussian Fields into World-Action Models},
  author = {Zhang, Zijian and Jiang, Yuqing and Zhou, Weitao and Li, Minglei and Zhang, Jinhao and Mu, Yao and Li, Xiaofan and Zhao, Hao and Yu, Haibao},
  journal = {arXiv preprint arXiv:2608.24714},
  year = {2026},
  url = {https://arxiv.org/abs/2608.24714}
}
\bibliographystyle{iclr2027_conference}

\appendix
\clearpage
\section{Additional Related work}

\paragraph{Structured 3D Manipulation.}
\citet{zhu2023groot} use object-centric 3D representations to improve robustness to background and camera changes. Lift3D reconstructs depth in task-relevant image regions before adapting a pretrained 2D model to encode point clouds~\citep{jia2024lift3d}. PointVLA injects point-cloud features into a pretrained vision-language-action model~\citep{li2025pointvla}. These methods retain explicit 3D representations for policy inference, whereas LEAP uses geometry as training supervision for a policy that takes RGB images and proprioception as inputs.

\paragraph{Other Routes to Spatial Supervision.}
\citet{ze2023rl3d} pretrain voxel-based 3D representations before visual reinforcement learning, while \citet{zhou2023spartn} use NeRF-synthesized wrist views and corrective actions to augment demonstrations for an RGB policy. Depth Helps trains with depth and predicts virtual depth from RGB at inference~\citep{pang2024depthhelps}. ReconVLA instead reconstructs image regions around manipulated objects to focus visual attention~\citep{song2025reconvla}. LEAP uses full and view-matched partial point-cloud reconstruction as auxiliary supervision during policy training.

\paragraph{Camera Fusion and View-Specific Features.}
\citet{jangir2022lookcloser} combine external and wrist images through cross-view attention. Multi-view Disentanglement learns both shared and camera-specific features to support deployment with fewer views~\citep{dunion2024multiview}. Cortical Policy combines geometrically aligned static-view features with a dynamic egocentric stream~\citep{zhang2026cortical}. LEAP addresses a complementary question by matching its partial geometric target to the head--wrist views retained during masking.

\paragraph{State Conditioning and Geometric Prediction.}
\citet{dehaan2019causal} show how predictive observation cues can lead imitation policies to learn incorrect causal relationships. Robo3R uses RGB images and robot state to reconstruct metric scene geometry for downstream manipulation~\citep{yang2026robo3r}. In LEAP, reconstruction serves as supervision for the visual encoder, so the auxiliary decoder excludes robot state while the action predictor retains it.

\section{Theoretical Analysis}

\subsection{Proof of \Cref{prop:state_nonidentifiability}}
\label{app:state_nonidentifiability}

\begin{proposition}[State-induced non-identifiability]
% \label{prop:state_nonidentifiability}
Let $d:\mathcal{G}\times\mathcal{G}\to[0,\infty)$ be a measurable reconstruction cost, with prediction first and target second; no metric properties are required.
For the population risk in \Cref{eq:optimal_reconstruction_risk}, with the infimum taken over all measurable decoders,
\begin{equation}
    \mathcal{R}_{d}(G\mid Z,S)
    \leq
    \mathcal{R}_{d}(G\mid Z).
    % \label{eq:risk_monotonicity}
\end{equation}
For $\mathcal{G}=\mathbb{R}^{m}$,
$\mathbb{E}[\|G\|_2^2]<\infty$, and squared Euclidean distortion
$d_2(\widehat G,G)=\|\widehat G-G\|_2^2$,
\begin{equation}
\begin{aligned}
    \mathcal{R}_{2}(G\mid Z)
    -
    \mathcal{R}_{2}(G\mid Z,S) 
     =
    \mathbb{E}\!\left[
        \left\|
            \mathbb{E}[G\mid Z,S]
            -
            \mathbb{E}[G\mid Z]
        \right\|_2^2
    \right].
\end{aligned}
% \label{eq:state_risk_gap}
\end{equation}
\end{proposition}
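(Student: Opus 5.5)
The inequality in \Cref{eq:risk_monotonicity} follows by restricting the infimum, and the identity in \Cref{eq:state_risk_gap} follows from the Pythagorean property of conditional expectation in $L^2$.

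\emph{Monotonicity.} The argument is pure set inclusion of admissible decoders. Given any measurable $g$ acting on $Z$, define $\tilde g(z,s):=g(z)$. Then $\tilde g$ is measurable on the product space, being $g$ composed with the coordinate projection. It also satisfies $\mathbb{E}[d(\tilde g(Z,S),G)]=\mathbb{E}[d(g(Z),G)]$. Hence the set of risks attainable with input $(Z,S)$ contains the set attainable with input $Z$, and the infimum over the larger set is no greater. No property of $d$ beyond measurability and nonnegativity is used. Nonnegativity makes the expectations well defined in $[0,\infty]$.

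\emph{Squared-error identity.} First I show that under $\mathbb{E}\|G\|_2^2<\infty$ the infimum in each risk is attained by the conditional mean. For any $X$ and any measurable $g$ with $\mathbb{E}\|g(X)\|^2<\infty$, expand
\[
\mathbb{E}\|g(X)-G\|^2=\mathbb{E}\|g(X)-\mathbb{E}[G\mid X]\|^2+\mathbb{E}\|\mathbb{E}[G\mid X]-G\|^2 .
\]
The cross term vanishes by the tower property, since $g(X)-\mathbb{E}[G\mid X]$ is $\sigma(X)$-measurable. Decoders with infinite second moment give infinite risk, because $G\in L^2$. It follows that $\mathcal{R}_2(G\mid X)=\mathbb{E}\|G-\mathbb{E}[G\mid X]\|^2$.

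Next apply this with $X=Z$ and $X=(Z,S)$. Write $A=\mathbb{E}[G\mid Z,S]$ and $B=\mathbb{E}[G\mid Z]$. Decompose $G-B=(G-A)+(A-B)$. The cross term $\mathbb{E}\langle G-A,A-B\rangle$ vanishes: $A-B$ is $\sigma(Z,S)$-measurable and square-integrable by Jensen, and $\mathbb{E}[G-A\mid Z,S]=0$. This gives $\mathcal{R}_2(G\mid Z)=\mathcal{R}_2(G\mid Z,S)+\mathbb{E}\|A-B\|^2$. Both risks are finite, so subtraction is legitimate and yields \Cref{eq:state_risk_gap}.

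\emph{Main obstacle.} The only delicate point is the measure-theoretic care in the second part. One must justify that restricting to square-integrable decoders loses nothing, that the cross terms are integrable (Cauchy--Schwarz with $G,A,B\in L^2$), and that $\sigma(Z)\subseteq\sigma(Z,S)$, so the tower property applies. The rest is routine.
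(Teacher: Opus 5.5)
Your proof is correct and follows the paper's argument. Monotonicity comes from embedding each visual-only decoder as $\tilde g(Z,S)=g(Z)$, and the identity comes from the orthogonal decomposition $G-\mathbb{E}[G\mid Z]=(G-\mathbb{E}[G\mid Z,S])+(\mathbb{E}[G\mid Z,S]-\mathbb{E}[G\mid Z])$ with a vanishing cross term. The paper only asserts that the conditional means are the optimal predictors; you also justify this, along with the integrability of the cross terms, so your write-up is the more careful of the two.
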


\begin{proof}
Every measurable visual-only decoder $g(Z)$ can be represented
by a state-conditioned decoder $\widetilde g(Z,S)=g(Z)$.
Taking infima over the respective decoder classes yields
\Cref{eq:risk_monotonicity}.

Under squared Euclidean distortion, the finite-second-moment
assumption ensures that the optimal predictions are the conditional
means $\mu_Z=\mathbb{E}[G\mid Z]$ and
$\mu_{ZS}=\mathbb{E}[G\mid Z,S]$.
Since $\mathbb{E}[G-\mu_{ZS}\mid Z,S]=0$,
the cross term in the squared-norm expansion of
$G-\mu_Z=(G-\mu_{ZS})+(\mu_{ZS}-\mu_Z)$
has zero expectation. Therefore,
\begin{equation}
\begin{aligned}
    \mathbb{E}\!\left[\|G-\mu_Z\|_2^2\right]
    &=
    \mathbb{E}\!\left[\|G-\mu_{ZS}\|_2^2\right] \\
    &\quad+
    \mathbb{E}\!\left[\|\mu_{ZS}-\mu_Z\|_2^2\right].
\end{aligned}
\end{equation}
Substituting the corresponding optimal risks proves
\Cref{eq:state_risk_gap}.
\end{proof}

\subsection{Proof of \Cref{prop:dominant_view}}
\label{app:dominant_view}

\begin{proposition}[Non-identifiability of view-wise contributions]
% \label{prop:dominant_view}
Let $I_h$, $I_l$, and $I_r$ denote observations from the head, left-wrist, and right-wrist cameras, respectively, and let $G$ denote the target geometry.
Suppose a head-view encoder $f_h$ and decoder $g_h$ satisfy
\[
    \mathbb{E}\!\left[
        d\!\left(g_h(f_h(I_h)),G\right)
    \right]
    \leq \varepsilon.
\]
Consider full-view models with concatenated embeddings. If their hypothesis classes admit constant wrist encoders and a decoder that applies $g_h$ to the head embedding alone, then there exists an admissible pair $(F,g)$ satisfying \[  F(I_h,I_l,I_r) = [f_h(I_h),c_l,c_r],\quad \mathbb{E}\!\left[d\!\left(g(F(I_h,I_l,I_r)),G\right)\right] \leq \varepsilon, \]
\end{proposition}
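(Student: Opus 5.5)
The plan is a direct construction. We exhibit an admissible full-view pair that copies the head-only predictor exactly and discards both wrist inputs. Its risk then equals the head-only risk, and that risk is already bounded by $\varepsilon$. No optimization over decoders is needed, and no property of $d$ beyond measurability and nonnegativity is used.

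First, by the admissibility assumption, we choose constant wrist encoders $f_l(I_l)\equiv c_l$ and $f_r(I_r)\equiv c_r$ for fixed vectors $c_l, c_r$ in the wrist embedding spaces. We then define the concatenated full-view encoder $F(I_h,I_l,I_r)=[f_h(I_h),c_l,c_r]$, which is exactly the form required in \Cref{prop:dominant_view}. Second, let $\pi_h$ denote the coordinate projection onto the head block of a concatenated embedding, so that $\pi_h([z_h,z_l,z_r])=z_h$. Again by assumption, the decoder $g=g_h\circ\pi_h$ is admissible. Both maps are measurable: $F$ is a product of measurable maps, and $\pi_h$ is continuous. Hence $(F,g)$ is an admissible pair and the expectation below is well defined.

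Third, we verify the identity pointwise. For every realization $(I_h,I_l,I_r)$,
\[
g(F(I_h,I_l,I_r))=g_h(\pi_h([f_h(I_h),c_l,c_r]))=g_h(f_h(I_h)).
\]
Therefore $d(g(F(I_h,I_l,I_r)),G)=d(g_h(f_h(I_h)),G)$ almost surely under the joint distribution of $(I_h,I_l,I_r,G)$. Taking expectations gives
\[
\mathbb{E}\!\left[d\!\left(g(F(I_h,I_l,I_r)),G\right)\right]=\mathbb{E}\!\left[d\!\left(g_h(f_h(I_h)),G\right)\right]\leq\varepsilon .
\]
Note that we may take this expectation over the joint law of all three views even though the hypothesis refers only to the marginal of $(I_h,G)$. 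This is valid because the integrand depends only on $(I_h,G)$.

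There is essentially no obstacle: the statement is engineered so that the admissibility hypotheses supply exactly the two ingredients needed, constant wrist encoders and a head-only decoder. The one point needing care is the measurability and well-definedness remark above. I would also add a closing remark tying the result back to the paper's argument. Since this pair's output is invariant to $(I_l,I_r)$, any learning procedure that certifies only a risk at most $\varepsilon$ cannot distinguish it from a model that genuinely uses the wrist views. Hence reconstruction accuracy alone does not identify view-wise contributions.
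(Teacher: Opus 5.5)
Your proof is correct and is essentially the paper's argument: you exhibit an admissible full-view pair that reproduces the head-only predictor while ignoring both wrist inputs, so its risk equals the head-only risk, which is at most $\varepsilon$. The only difference is cosmetic. The paper projects at the observation level ($f^\star=f_h\circ\pi_h$, $g^\star=g_h$), whereas you keep the concatenated embedding $[f_h(I_h),c_l,c_r]$ and project inside the decoder, which matches the form stated in the proposition more literally.
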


\begin{proof}
Let $\pi_h(i_h,i_l,i_r)=i_h$ be the projection onto the head
observation, and define
\[
    f^\star := f_h\circ\pi_h,
    \qquad
    g^\star := g_h.
\]
By the model-class assumption, $(f^\star,g^\star)$ is an
admissible full-view encoder--decoder pair. Its reconstruction
risk satisfies
\begin{align*}
    &\mathbb{E}\!\left[
        d\!\left(g^\star(f^\star(I_h,I_l,I_r)),G\right)
    \right] \\
    &\quad =
    \mathbb{E}\!\left[
        d\!\left(g_h(f_h(\pi_h(I_h,I_l,I_r))),G\right)
    \right] \\
    &\quad =
    \mathbb{E}\!\left[
        d\!\left(g_h(f_h(I_h)),G\right)
    \right]
    \leq \varepsilon.
\end{align*}
Moreover, for any fixed $i_h$ and any wrist observations
$(i_l,i_r)$ and $(i_l',i_r')$,
\[
    f^\star(i_h,i_l,i_r)
    =
    f_h(i_h)
    =
    f^\star(i_h,i_l',i_r').
\]
Thus, $f^\star(I_h,I_l,I_r)$ is measurable with respect to
$\sigma(I_h)$ and invariant to both wrist inputs, establishing
the claimed solution.
\end{proof}

\section{Implementation Details}\label{sec:imp}

\subsection{Task Category of RoboTwin}

We divide the RoboTwin 50 tasks in the following manner:
\begin{itemize}
    \item \textbf{Object-centric Spatial Manipulation (14):}
    \texttt{adjust\_bottle},
    \texttt{move\_pillbottle\_pad},
    \texttt{move\_stapler\_pad},
    \texttt{place\_bread\_basket},
    \texttt{place\_bread\_skillet},
    \texttt{place\_container\_plate},
    \texttt{place\_empty\_cup},
    \texttt{place\_fan},
    \texttt{place\_mouse\_pad},
    \texttt{place\_object\_scale},
    \texttt{place\_object\_stand},
    \texttt{place\_phone\_stand},
    \texttt{place\_shoe},
    \texttt{rotate\_qrcode}.

    \item \textbf{Multi-object Spatial Reasoning (14):}
    \texttt{blocks\_ranking\_rgb},
    \texttt{blocks\_ranking\_size},
    \texttt{move\_can\_pot},
    \texttt{place\_a2b\_left},
    \texttt{place\_a2b\_right},
    \texttt{place\_burger\_fries},
    \texttt{place\_can\_basket},
    \texttt{place\_cans\_plasticbox},
    \texttt{place\_dual\_shoes},
    \texttt{place\_object\_basket},
    \texttt{stack\_blocks\_three},
    \texttt{stack\_blocks\_two},
    \texttt{stack\_bowls\_three},
    \texttt{stack\_bowls\_two}.

    \item \textbf{Interaction / Articulation (14):}
    \texttt{beat\_block\_hammer},
    \texttt{click\_alarmclock},
    \texttt{click\_bell},
    \texttt{dump\_bin\_bigbin},
    \texttt{handover\_block},
    \texttt{handover\_mic},
    \texttt{hanging\_mug},
    \texttt{open\_laptop},
    \texttt{open\_microwave},
    \texttt{press\_stapler},
    \texttt{put\_object\_cabinet},
    \texttt{scan\_object},
    \texttt{stamp\_seal},
    \texttt{turn\_switch}.

    \item \textbf{Less-geometry-dependent (8):}
    \texttt{grab\_roller},
    \texttt{lift\_pot},
    \texttt{move\_playingcard\_away},
    \texttt{pick\_diverse\_bottles},
    \texttt{pick\_dual\_bottles},
    \texttt{put\_bottles\_dustbin},
    \texttt{shake\_bottle},
    \texttt{shake\_bottle\_horizontally}.
\end{itemize}

\subsection{Experimental Details}\label{subsec:task}

\begin{figure}[h]
    \centering
    \includegraphics[width=0.99\linewidth]{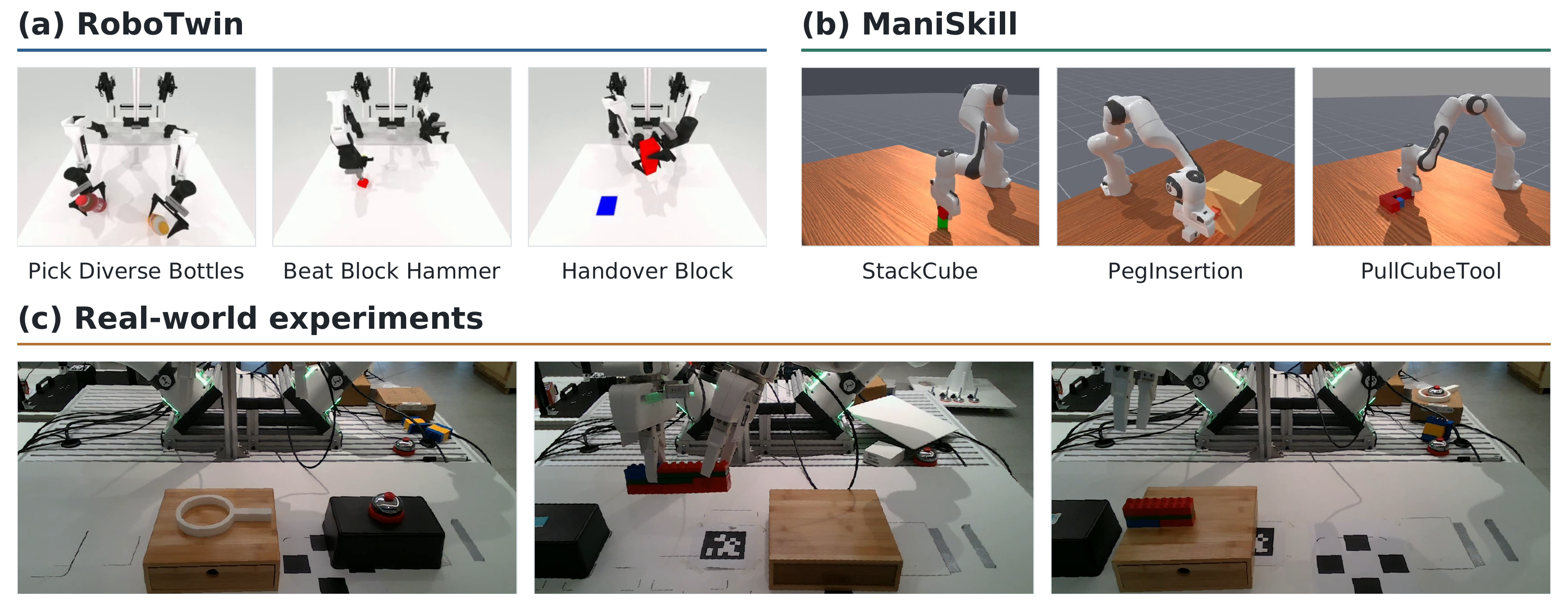}
    \caption{Evaluation environments. (a) Representative bimanual manipulation tasks from RoboTwin. (b) Representative tasks in ManiSkill. (c) Three real-world manipulation setups, including Pick Place, Assemble and Handover Blocks. Simulation panels show representative frames from benchmark demonstrations.}
    \label{fig:task}
    \vspace{-2ex}
\end{figure}
\paragraph{Task Overview.}
Figure~\ref{fig:task} illustrates the three evaluation settings. RoboTwin provides diverse bimanual manipulation tasks, while ManiSkill covers single-arm tasks including stacking, insertion, and tool use. Our real-world evaluation comprises three tasks—Pick Place, Assemble, and Handover Blocks—that test object placement, precise assembly, and coordination between arms, respectively.

\subsection{Hyperparameters}\label{sec:aes}

% Requires \usepackage{booktabs}
\begin{table*}[h]
\centering
\caption{Main hyperparameters for DP-\methodname{} and ACT-\methodname{}. Each RoboTwin task provides 50 demonstrations and each Maniskill task provides 50 demonstrations. Point clouds are used only for training supervision; both policies use RGB images and robot state at inference.}
\label{tab:robotwin-hyperparameters}
\footnotesize
\setlength{\tabcolsep}{5pt}
\renewcommand{\arraystretch}{1.12}
\begin{tabular}{@{}p{0.24\textwidth}p{0.34\textwidth}p{0.34\textwidth}@{}}
\toprule
\textbf{Hyperparameter} & \textbf{DP-\methodname{}} & \textbf{ACT-\methodname{}} \\
\midrule
Policy inputs &
\multicolumn{2}{p{0.68\textwidth}@{}}{Head, left-wrist, and right-wrist RGB; 14-D robot state} \\
Visual encoder &
\multicolumn{2}{p{0.68\textwidth}@{}}{Independent ResNet-18 per camera (GroupNorm; no pretraining)} \\
View fusion &
\multicolumn{2}{p{0.68\textwidth}@{}}{2-layer, 8-head transformer (width 512)} \\
Geometry supervision &
\multicolumn{2}{p{0.68\textwidth}@{}}{Full and partial 1024-point world-frame targets; visual-only decoder (width 512)} \\
View sampling &
\multicolumn{2}{p{0.68\textwidth}@{}}{Head retained; one wrist sampled for partial supervision} \\
\midrule
Geometry loss &
$0.5\mathcal{L}_{\mathrm{full}}+0.5\mathcal{L}_{\mathrm{partial}}$ &  $1.5\mathcal{L}_{\mathrm{full}}+1.5\mathcal{L}_{\mathrm{partial}}$\\

Policy backbone &
Conditional 1-D U-Net (widths 256/512/1024) &
CVAE Transformer (hidden width 512; latent width 32; 4 encoder and 7 decoder layers) \\
Action prediction &
8-step horizon; 3 observations; execute 6 steps &
50-step chunk from 1 observation \\
Policy objective &
Diffusion loss (DDPM; 100 training and inference steps) &
$\mathcal{L}_{1}+10\mathcal{L}_{\mathrm{KL}}$ \\
Optimizer (LR; weight decay) &
AdamW ($10^{-4}$; $10^{-6}$) &
AdamW ($10^{-5}$; $10^{-4}$) \\
Batch size & 32 & 8 \\
Training epochs & 600 & 4000 \\
\bottomrule
\end{tabular}
\end{table*}

\paragraph{Hyperparameters.}
We train DP-\methodname{} and ACT-\methodname{} on 50 demonstrations per RoboTwin task. Both policies take RGB images from a head camera and two wrist cameras, together with a 14-D robot state. Each camera has an independently trained ResNet-18 encoder, and a two-layer transformer fuses the visual features. During training, we retain the head view and sample one wrist view for partial geometry supervision. A visual-only decoder reconstructs full and partial 1024-point clouds in the world frame; the geometry loss uses a 10-epoch linear warmup. DP-\methodname{} uses a conditional 1-D U-Net and is trained for 600 epochs, while ACT-\methodname{} uses a CVAE Transformer and is trained for 4000 epochs. Point clouds and the geometry decoder are used only during training.

\paragraph{Pre-Fusion View Masking.}
Each camera is processed by an independent ResNet-18 encoder, producing camera-specific features $Z_c=f_{\theta_c}(I_c)$ without cross-view interaction.
For partial reconstruction, we apply view masking to these features \emph{before} the fusion layer, retaining the head view and one randomly selected wrist.
The fusion layer then aggregates visual information only from the retained views.
Because masking precedes all cross-view interaction, the dropped camera's features cannot reach the partial reconstruction decoder, either directly or indirectly through the retained features.
This ensures that the partial prediction is conditioned only on the selected head--wrist observations, rather than on features that have already incorporated information from all cameras.

\paragraph{Data Collection and Evaluation Protocol.}
We collect 50 expert trajectories per RoboTwin task, 100 expert trajectories per ManiSkill task, and 50 expert trajectories per real-world task.
For each simulation task, we evaluate each policy using three independent random seeds, with 100 rollouts per seed, yielding 300 evaluation rollouts per policy and task.
We compute the success rate separately for each seed and report the mean and standard deviation across the three seeds.
For real-world experiments, we use a single random seed and evaluate each policy over 20 rollouts per task, reporting the percentage of successful trials.

\subsection{One-Arm Camera Configuration}\label{subsec:one-arm}
For one-arm tasks, we use three RGB views: a head camera $h$, a wrist camera $w$, and an additional outside camera $o$ with broad scene coverage. Because either $h$ or $o$ remains available when the other is dropped, we uniformly mask any one of the three cameras:
\begin{equation}
    M\in\bigl\{\{h,w\},\{o,w\},\{h,o\}\bigr\}.
\end{equation}
When the wrist is retained, we construct the partial target from wrist-visible points, as above. When the wrist is masked, we instead use points approximately visible from the retained outside camera. Thus, the partial target is always aligned with an available view, while the two configurations containing the wrist encourage the policy to use its local geometric observations.

\section{Additional Experimental Results}

\subsection{Full Results of Ablation Study}\label{subsec:full-a}

\begin{table}[t]
    \centering
    \caption{Full ablation results on five manipulation tasks, reported as success rates (\%).}
    \label{tab:ablation-full}
    \footnotesize
    \setlength{\tabcolsep}{3.5pt}
    \renewcommand{\arraystretch}{1.15}
    \begin{tabular}{@{}lcccccc@{}}
        \toprule
        Variant &
        \shortstack{BeatBlock\\Hammer} &
        \shortstack{Handover\\Blocks} &
        \shortstack{PickDiverse\\Bottles} &
        \shortstack{PlaceCans\\Plasticbox} &
        \shortstack{PlaceObject\\Basket} &
        Average \\
        \midrule
        DP-\methodname{} (full)
            & \textbf{85} & \textbf{89} & \textbf{48}
            & \textbf{87} & \textbf{74} & \textbf{76.6} \\
        \quad w/o wrist dropout
            & 73 & 59 & 34 & 49 & 61 & 55.2 \\
        \quad w/ state in decoder
            & 68 & 70 & 36 & 64 & 53 & 58.2 \\
        \quad w/ depth reconstruction
            & 79 & 83 & 47 & 79 & 67 & 71.0 \\
        \quad w/o target alignment
            & 78 & 63 & 39 & 60 & 70 & 62.0 \\
        \midrule
        ACT-\methodname{} (full)
            & 55 & \textbf{79} & \textbf{49}
            & 20 & \textbf{42} & \textbf{49.0} \\
        \quad w/o wrist dropout
            & 49 & 63 & 43 & 13 & 25 & 38.6 \\
        \quad w/ state in decoder
            & 47 & 58 & 28 & 15 & 36 & 36.8 \\
        \quad w/ depth reconstruction
            & \textbf{59} & 70 & 43 & \textbf{24}
            & 31 & 45.4 \\
        \quad w/o target alignment
            & 51 & 72 & 47 & 17 & 35 & 44.4 \\
        \bottomrule
    \end{tabular}
\end{table}

\subsection{Sensitivity Analysis}\label{subsec:sen}

\begin{table}[h]
    \centering
    \caption{Sensitivity to the geometry reconstruction weight $\lambda$.
    Entries are success rates; Average is the arithmetic mean over five tasks.
    The best result within each policy group is shown in bold. Default $\lambda_{\mathrm{full}}$ and $\lambda_{\mathrm{partial}}$ for DP-\methodname{} is 0.5 and for ACT-\methodname{} is 1.5.  }
    \label{tab:geometry_weight_sensitivity}
    % \scriptsize
    \setlength{\tabcolsep}{5pt}
    \begin{tabular}{@{}lcccccc@{}}
        \toprule
        Task & \multicolumn{3}{c}{DP-\methodname{}}
        & \multicolumn{3}{c}{ACT-\methodname{}} \\
        \cmidrule(lr){2-4}\cmidrule(l){5-7}
        $\lambda_{\mathrm{full}}/\lambda_{\mathrm{partial}}$
        & $0.4$ & $\mathbf{0.5}$ (Default) & $0.6$
        & $1.2$ & $\mathbf{1.5}$ (Default) & $1.8$ \\
        \midrule
        BeatBlockHammer       & 0.79 & \textbf{0.85} & 0.82 & 0.51 & \textbf{0.55} & 0.52 \\
        HandoverBlock         & 0.83 & \textbf{0.89} & 0.79 & 0.72 & \textbf{0.79} & 0.69 \\
        PickDiverseBottles    & 0.45 & \textbf{0.48} & 0.41 & 0.42 & 0.49 & \textbf{0.53} \\
        PlaceCansPlasticbox   & 0.89 & 0.87 & \textbf{0.93} & 0.17 & \textbf{0.20} & 0.13 \\
        PlaceObjectBasket     & \textbf{0.77} & 0.74 & 0.72 & 0.38 & \textbf{0.42} & 0.41 \\
        \midrule
        Average               & 0.746 & \textbf{0.766} & 0.734
                              & 0.440 & \textbf{0.490} & 0.456 \\
        \bottomrule
    \end{tabular}
\end{table}

\paragraph{Sensitivity to Geometry Reconstruction Weight.}
We vary the geometry reconstruction weights $\lambda_{\mathrm{full}}$ and $\lambda_{\mathrm{partial}}$ on five RoboTwin tasks. DP-\methodname{} achieves its highest average success rate at $\lambda_{\mathrm{full}}=\lambda_{\mathrm{partial}}$=0.5 (76.6\%), compared with 74.6\% and 73.4\% at $\lambda_{\mathrm{full}}=\lambda_{\mathrm{partial}}=0.4$ and $0.6$, respectively. ACT-\methodname{} performs best on average at $\lambda_{\mathrm{full}}=\lambda_{\mathrm{partial}}=1.5$ (49.0\%), compared with 44.0\% at $\lambda_{\mathrm{full}}=\lambda_{\mathrm{partial}}=1.2$ and 45.6\% at $\lambda_{\mathrm{full}}=\lambda_{\mathrm{partial}}=1.8$. The best weight varies across individual tasks, but the average results support $\lambda_{\mathrm{full}}=\lambda_{\mathrm{partial}}=0.5$ for DP and $\lambda_{\mathrm{full}}=\lambda_{\mathrm{partial}}=1.5$ for ACT.

\subsection{Full Results on RoboTwin}

\begin{table*}[h]
\centering
\caption{Per-task success rates (\%) on the 50 RoboTwin 2.0 tasks under clean evaluation. DP, RDT, and $\pi_0$ are from the official single-task leaderboard; the other results are from our evaluations.}
\label{tab:robotwin_50_per_task_1}
\scriptsize
\setlength{\tabcolsep}{2.2pt}
\renewcommand{\arraystretch}{1.12}
\resizebox{\textwidth}{!}{%
\begin{tabular}{@{}l*{12}{r}@{}}
\toprule
Task & DP3 & DP & \shortstack{DP-\\MultiView} & DP-eVGGT & RDT & DP-SF & $\pi_0$ & GAP & DP-LEAP & \shortstack{ACT-\\pointcloud} & ACT & ACT-LEAP \\
\midrule
adjust\_bottle & 99 & 97 & 95 & 96 & 81 & 99 & 90 & 98 & 99 & 93 & 90 & 92 \\
beat\_block\_hammer & 72 & 42 & 27 & 43 & 77 & 60 & 43 & 56.3 & 85 & 58 & 43 & 55 \\
blocks\_ranking\_rgb & 4 & 0 & 0 & 1 & 3 & 4 & 19 & 3 & 6 & 0 & 1 & 1 \\
blocks\_ranking\_size & 1 & 1 & 0 & 1 & 0 & 2 & 7 & 1 & 2 & 1 & 0 & 1 \\
click\_alarmclock & 77 & 61 & 52 & 47 & 61 & 58 & 63 & 64 & 92 & 90 & 35 & 87 \\
click\_bell & 97 & 54 & 50 & 56 & 80 & 73 & 44 & 80 & 97 & 88 & 63 & 84 \\
dump\_bin\_bigbin & 87 & 49 & 43 & 51 & 64 & 59 & 83 & 61 & 89 & 93 & 61 & 89 \\
grab\_roller & 95 & 98 & 80 & 87 & 74 & 89 & 96 & 84 & 95 & 97 & 83 & 96 \\
handover\_block & 87 & 10 & 21 & 34 & 45 & 47 & 45 & 68.7 & 89 & 89 & 47 & 79 \\
handover\_mic & 99 & 53 & 46 & 59 & 90 & 69 & 98 & 71 & 100 & 99 & 88 & 99 \\
hanging\_mug & 26 & 8 & 3 & 7 & 23 & 15 & 11 & 23 & 21 & 13 & 8 & 17 \\
lift\_pot & 96 & 39 & 49 & 40 & 72 & 51 & 84 & 63 & 55 & 95 & 85 & 93 \\
move\_can\_pot & 72 & 39 & 27 & 33 & 25 & 53 & 58 & 55.3 & 67 & 70 & 30 & 65 \\
move\_pillbottle\_pad & 42 & 1 & 0 & 12 & 8 & 17 & 21 & 9 & 23 & 37 & 2 & 41 \\
move\_playingcard\_away & 69 & 47 & 47 & 38 & 43 & 50 & 53 & 41 & 60 & 74 & 37 & 66 \\
move\_stapler\_pad & 8 & 1 & 0 & 1 & 2 & 3 & 0 & 2 & 4 & 1 & 0 & 1 \\
open\_laptop & 75 & 49 & 55 & 50 & 59 & 62 & 85 & 79 & 64 & 85 & 61 & 79 \\
open\_microwave & 88 & 5 & 21 & 43 & 37 & 69 & 80 & 62 & 67 & 66 & 81 & 86 \\
pick\_diverse\_bottles & 46 & 6 & 12 & 22 & 2 & 37 & 27 & 24.3 & 48 & 56 & 3 & 49 \\
pick\_dual\_bottles & 81 & 24 & 25 & 37 & 42 & 56 & 57 & 69 & 74 & 67 & 31 & 63 \\
place\_a2b\_left & 39 & 2 & 0 & 15 & 3 & 22 & 31 & 26 & 34 & 21 & 1 & 33 \\
place\_a2b\_right & 37 & 13 & 10 & 14 & 1 & 16 & 27 & 24.7 & 21 & 16 & 0 & 23 \\
place\_bread\_basket & 23 & 14 & 6 & 9 & 10 & 27 & 17 & 20.7 & 36 & 20 & 4 & 32 \\
place\_bread\_skillet & 37 & 11 & 6 & 18 & 5 & 29 & 23 & 34 & 42 & 26 & 10 & 30 \\
place\_burger\_fries & 70 & 72 & 51 & 50 & 50 & 56 & 80 & 72 & 70 & 68 & 46 & 70 \\
place\_can\_basket & 69 & 18 & 23 & 32 & 19 & 40 & 41 & 27 & 59 & 15 & 10 & 30 \\
place\_cans\_plasticbox & 48 & 40 & 31 & 37 & 6 & 55 & 34 & 61.3 & 87 & 35 & 8 & 20 \\
place\_container\_plate & 86 & 41 & 41 & 53 & 78 & 71 & 88 & 83 & 74 & 86 & 75 & 88 \\
place\_dual\_shoes & 13 & 8 & 3 & 5 & 4 & 10 & 15 & 27 & 29 & 6 & 2 & 9 \\
place\_empty\_cup & 65 & 37 & 31 & 43 & 56 & 67 & 37 & 76.3 & 95 & 59 & 45 & 67 \\
place\_fan & 36 & 3 & 3 & 8 & 12 & 12 & 20 & 26 & 27 & 19 & 3 & 27 \\
place\_mouse\_pad & 4 & 0 & 0 & 2 & 1 & 3 & 7 & 5 & 9 & 0 & 0 & 1 \\
place\_object\_basket & 65 & 15 & 15 & 23 & 33 & 49 & 16 & 60.3 & 74 & 37 & 17 & 42 \\
place\_object\_scale & 16 & 1 & 0 & 4 & 1 & 7 & 10 & 8 & 10 & 11 & 0 & 17 \\
place\_object\_stand & 54 & 22 & 21 & 29 & 15 & 38 & 36 & 42.7 & 43 & 29 & 2 & 25 \\
place\_phone\_stand & 44 & 13 & 13 & 20 & 15 & 40 & 35 & 46 & 49 & 35 & 0 & 38 \\
place\_shoe & 49 & 23 & 23 & 31 & 35 & 37 & 28 & 49 & 45 & 19 & 2 & 28 \\
press\_stapler & 66 & 6 & 13 & 44 & 41 & 75 & 62 & 78 & 93 & 57 & 32 & 65 \\
put\_bottles\_dustbin & 60 & 22 & 22 & 41 & 21 & 51 & 54 & 61 & 63 & 44 & 35 & 44 \\
put\_object\_cabinet & 72 & 42 & 42 & 37 & 33 & 44 & 68 & 32 & 63 & 23 & 10 & 37 \\
rotate\_qrcode & 71 & 13 & 13 & 25 & 50 & 49 & 68 & 59 & 62 & 25 & 0 & 28 \\
scan\_object & 27 & 9 & 18 & 23 & 4 & 27 & 18 & 30.7 & 37 & 10 & 4 & 7 \\
shake\_bottle & 98 & 65 & 65 & 73 & 74 & 83 & 97 & 93.3 & 94 & 100 & 56 & 97 \\
shake\_bottle\_horizontally & 98 & 59 & 59 & 72 & 84 & 69 & 99 & 67 & 98 & 100 & 73 & 96 \\
stack\_blocks\_three & 1 & 0 & 0 & 1 & 2 & 3 & 17 & 2 & 4 & 1 & 0 & 1 \\
stack\_blocks\_two & 22 & 7 & 3 & 19 & 21 & 22 & 42 & 32 & 47 & 7 & 19 & 25 \\
stack\_bowls\_three & 57 & 63 & 49 & 50 & 51 & 51 & 66 & 57 & 63 & 49 & 50 & 57 \\
stack\_bowls\_two & 85 & 61 & 61 & 76 & 76 & 72 & 91 & 95.3 & 96 & 84 & 79 & 88 \\
stamp\_seal & 18 & 2 & 2 & 5 & 1 & 9 & 3 & 7 & 10 & 27 & 4 & 19 \\
turn\_switch & 46 & 36 & 30 & 35 & 35 & 33 & 27 & 51 & 56 & 23 & 8 & 20 \\
\midrule
\textbf{Average} & 55.94 & 28.04 & 26.14 & 33.04 & 34.50 & 42.80 & 46.42 & 47.38 & 56.54 & 46.48 & 28.88 & 48.14 \\
\bottomrule
\end{tabular}%
}
\end{table*}

\end{document}